\newtheorem{theorem}{Theorem}
\newtheorem{lemma}[theorem]{Lemma}
\newtheorem{definition}[theorem]{Definition}
\newtheorem{proposition}[theorem]{Proposition}
\newtheorem*{remark}{Remark}
\setlist[tablenotes]{label=\tnote{\alph*},ref=\alph*,itemsep=\z@,topsep=\z@skip,partopsep=\z@skip,parsep=\z@,itemindent=\z@,labelindent=\tabcolsep,labelsep=.2em,leftmargin=*,align=left,before={\footnotesize}}
\newcommand{\R}[0]{\mathbb{R}}
\newcommand{\Z}[0]{\mathbb{Z}}
\newcommand*\samethanks[1][\value{footnote}]{\footnotemark[#1]}
\title{\texttt{Persformer}: A Transformer Architecture for Topological Machine Learning}
\author{
    Raphael Reinauer\thanks{Ecole Polytechnique Fédérale de Lausanne (EPFL) ,
        Laboratory for topology and neuroscience,
        CH-1015 Lausanne, Switzerland},
    Matteo Caorsi\thanks{L2F SA, Rue du centre 9, Saint-Sulpice, Switzerland} \thanks{The last two authors contributed equally to this work.} ,
    Nicolas Berkouk\samethanks[1] \samethanks[3]
}
\date{\today}
\begin{document}

\maketitle

\begin{abstract}
One of the main challenges of Topological Data Analysis (TDA) is to extract features from persistent diagrams directly usable by machine learning algorithms. Indeed, persistence diagrams are intrinsically (multi-)sets of points in $\R^2$ and cannot be seen in a straightforward manner as vectors. In this article, we introduce \texttt{Persformer}, the first Transformer neural network architecture that accepts persistence diagrams as input. The \texttt{Persformer} architecture significantly outperforms previous topological neural network architectures on classical synthetic and graph benchmark datasets. Moreover, it satisfies a universal approximation theorem. This allows us to introduce the first interpretability method for topological machine learning, which we explore in two examples.
\end{abstract}

\section{Introduction}

Topological Data Analysis (TDA) is a rapidly growing field of data science that incorporates methods to estimate the topology of a dataset within machine learning pipelines. The most common descriptors in the field are the so-called \emph{persistence diagrams}, which are further used in TDA pipelines for classification and regression tasks; the applications span a wide variety of scientific areas such as material science \cite{NHH15, Nanoporous}, neuroscience \cite{CliqueNeuron}, cancer biology \cite{aukerman_et_al:LIPIcs:2020:12169}, comprehension of deep learning architectures \cite{Nait20,lacombe2021topological} and the analysis of COVID19 propagation \cite{dlotko2020visualising}.  

Persistence diagrams are subsets of $\R^2$ whose points correspond to topological features in the dataset (such as connected components, loops, voids, ...), with coordinates encoding a notion of size of the feature. They are usually compared using either the bottleneck distance or a Wasserstein-type distance. We refer the reader to one of the many introductory TDA textbooks \cite{CTDA, Oudo15,EAT} for the details of these definitions. 

The main challenges for incorporating persistence diagrams into a machine learning pipeline are twofold. Firstly, the data structure underlying persistence diagrams is intrinsically a set. Hence the learned set representations should be invariant to the order in which points are presented. Secondly, whatever the type of distance considered (bottleneck or Wasserstein), the space of persistence diagrams cannot be isometrically embedded into a Hilbert space \cite[Theorem 4.3]{hilbertpersistence}. This is a major challenge because most machine learning algorithms are designed to operate on vectors in Hilbert spaces.

To overcome these issues, the TDA community has developed several \emph{vectorization} methods in order to associate to a set of persistence diagrams a set of vectors in a Hilbert space. These methods are primarely of two types. One either defines \emph{a priori} the vectorization map, as for persistence landscapes \cite{Bubenik2015StatisticalTD}, or one learns it through a trainable architecture, such as a neural networks \cite{Carrire2020PersLayAN,Hofer2019}. 

One of the major changes of paradigm in neural network architectures over the last five years concerns the introduction of the \emph{transformer} architecture \cite{Vaswani2017}, incorporating a \emph{self-attention} mechanism. In short, transformer models process the datum as a whole, exploit long-distance relationships between the elements of a datum and avoid recursion altogether. This is why transformers architectures achieve state-of-the-art performance on a variety of tasks. In Natural Language Processing (NLP), large pre-trained language models like \texttt{BERT} \cite{Devlin2018} and \texttt{GPT-3} \cite{GPT3} achieve state-of-the-art result on various NLP benchmarks. For computer vision tasks, the Vision Transformer \cite{zhai2021scaling} models achieve state-of-the-art results on several benchmarks including the ImageNet classification benchmark.

In this work, we introduce \texttt{Persformer}, a transformer neural network architecture designed for analyzing persistence diagram datasets, making available the power and versatility of transformer architecture for topological machine learning. We compare our model with already existing neural network architectures handling persistence diagrams: \texttt{PersLay} \cite{Carrire2020PersLayAN} and \texttt{PLLAy} \cite{PLLay}, exceeding the test accuracy of previous state-of-the-art models by 3.5\% and 4.0\%, respectively, on benchmark datasets.

The fact that our architecture does not make use of any handcrafted vectorization of persistence diagrams, as is the case for already existing methods \cite{Carrire2020PersLayAN, PLLay}, allows us to adapt a well-known interpretability method for neural networks to \texttt{Persformer}. We define Saliency Maps for \texttt{Persformer}, whose value on a given point of a persistence diagram quantifies the importance of this point for the classification task. In particular, we recover with Saliency Maps the observation made in \cite{Bubenik2020}, that the ``small bars'' of persistence diagrams detect curvature. These results lead us to conclude that it is too restrictive for topological machine learning tasks to  assume that ``small bars'' are mere representations of noise. Therefore, since ``small bars'' are unstable (i.e., they change easily from one realization of a datum to another), we conclude that it is too restrictive to impose stability of the topological features extracted by our \texttt{Persformer}.

\section{Related Works}\label{s:related_work}

As described in the introduction, one of the main challenges of topological data analysis is that the space of persistence diagrams, equipped with either the bottleneck or the Wasserstein distance, cannot be isometrically embedded into a Hilbert space \cite[Theorem 4.3]{hilbertpersistence}. Since most machine learning methods assume that the input dataset is a subset of a Hilbert space, they cannot be directly applied to datasets of persistence diagrams. To overcome this issue, considerable effort has been made in the TDA community to define \emph{vectorizations} of the space of persistence diagrams, that is, to define a Hilbert space $\mathcal{H}$ together with a continuous map $\phi : \mathcal{D} \to \mathcal{H}$, with $\mathcal{D}$ the space of persistence diagrams endowed with either the bottleneck or the Wasserstein distance. These methods are primarily of two types.

\paragraph{Prescribed vectorization methods} This corresponds to defining a Hilbert space $\mathcal{H}$, and a continuous map $\phi : \mathcal{D} \to \mathcal{H} $ that are independent of the machine learning task one tries to solve. Among others, there are the \emph{persistence scale-space kernel} \cite{reininghaus2014stable}, \emph{persistence landscapes} \cite{Bubenik2015StatisticalTD}, the \emph{weighted Gaussian kernel} \cite{hiraokagaussian}, or the \emph{sliced Wasserstein kernel} \cite{carrieresliced}.

\paragraph{Learnable vectorization methods} Another approach to vectorization methods of persistence diagrams is to learn the ``best" one for a fixed  data analysis task among a family of vectorization maps. More precisely, assume we are considering a fixed learning task (such as supervised classification) on a Hilbert space $\mathcal{H}$, and a family of vectorizations $\phi_\theta : \mathcal{D} \to \mathcal{H}$. Then this approach consists in learning the best value of the parameter $\theta$, according to an optimization criterion provided by the learning task (typically a loss function associated to the classification process in $\mathcal{H}$). 

The first article introducing learnable vectorization methods of persistence diagrams is \cite{NIPS2017_883e881b}, where the $\phi_\theta$ are given by the sum of two-dimensional Gaussian functions (with mean and standard deviation defined by $\theta$) evaluated on the points of persistence diagrams. In \cite{Hofer2019}, the authors introduce the first neural network architecture able to accept persistence diagrams as input. Furthermore, they elaborate on the observation (initially made in another context \cite[Theorem 2]{Zaheer2017}) that any  real-valued Hausdorff-continuous function on the set of persistence diagrams contained in a fixed compact subset of $\R^2$ with exactly $n$ points can be approximated arbitrarily well by

$$L'(\{x_1,...,x_n\}) := \rho \left (\sum_{i=1}^n \phi(x_i) \right ), $$



\noindent for certain functions $\phi : \R^2 \to \R^p $ and $\rho : \R^p \to \R$. They introduce specific classes of functions for $\rho$ and $\phi$, which have then been extended in \cite{Carrire2020PersLayAN} by the \texttt{PersLay} architecture.

\section{Background}

\subsection{Persistence diagrams}

Persistence diagrams are the most commonly used descriptors developed by the TDA community. They come in two main flavors: ordinary and extended persistence diagrams. We refer to the textbook \cite{CTDA} for an extended exposition of the mathematical background of this section.

Ordinary persistence diagrams track the evolution of topological features of dimension $i$ (connected component for $i=0$, holes for $i=1$, cavities $i=2$, ...) in nested sequences of topological spaces (or simplicial complexes) $(X_t)_{t\in \R}$, where $X_a \subset X_b$ whenever $a \leq b$. The variable $t \in \R$ is called the   \emph{filtration value} and intuitively corresponds to the time of the evolution of the topological features. The appearance or disappearance of different topological features depending on $t$ is the birth of new topological features or death. If a topological feature of dimension $i$ is born at time $b \in \R$ in the filtration and dies at time $d \in \R \cup \{+\infty\}$, it will give rise to a point with coordinate $(b,d)$ in the $i$-th persistence diagram of this filtration. Therefore, ordinary persistence diagrams are multi-sets (sets where elements can have multiplicity) of points in the subset $\{(b,d) \in \R \times \R\cup \{+\infty\} \mid b < d\}$ of $\R \times \R\cup \{+\infty\}$ \cite[Section 3.2.1]{CTDA}. 

Extended persistence diagrams generalize ordinary persistence, and subsume the size and type of topological features of the fibers of a continuous map $f : X \to \R$. In practice, extended persistence diagrams are defined for real-valued functions on the $0$-simplices of a simplicial complex. For each dimension $i$, the $i$-th dimensional topological features of the fiber of $f$ are encoded with a birth $b\in \R$ and a death $d \in \R$, and one of the following four possible types: Ordinary, Relative, Extended+ or Extended-- \cite{extending}. Extended persistence is a strict generalization of ordinary persistence since the latter is contained in the former. Furthermore, it has the computational advantage of containing only points with finite coordinates.

For simplicity, we will treat persistence diagrams as sets and not multi-sets, that is, we will assume all points in a persistence diagram to be disjoint. Let $X \subset \{(b,d) \in \R^2 \mid b < d \}$.

\begin{definition}
The set of persistence diagrams on $X$ is defined by \[PD(X) := \{D \subset X \mid \forall K \subset X ~\textnormal{compact}, D \cap K ~\textnormal{is finite}\}.\]

\noindent Given $n \in \Z_{>0}$, we also define the set of persistence diagrams with $n$ points by:

\[PD_n(X) := \{D \in PD(X) \mid D~\textnormal{has $n$ elements}\}.\]
\end{definition}

\subsection{Metrics on persistence diagrams}

It is possible to compare persistence diagrams using various distances, all defined as the infimum cost of a partial matching problem between points of two persistence diagrams. There are mainly two classes of matching rules. In the first case, one has to match all points of the first diagram in a one-to-one correspondence with the points in the second one. In the second case, one only looks for partial bijection between points of the persistence diagrams, the unmatched points being matched to their projection on the diagonal  $\Delta = \{(x,x) \mid x \in \R\}$. Both persistence diagrams and  distances can be efficiently computed by software such as \emph{Giotto-tda} \cite{tauzin2021giottotda}, \emph{Gudhi} \cite{gudhi} or \emph{Dionysus} \cite{Dionysus}.

For $p \in \R_{\geq 1}$, and $x = (x_1,...,x_n) \in \R^n$, we denote by $\|x\|_p$ the $p$-norm of $x$ defined by $\|x\|_p = (\sum_i x_i^p)^{\frac{1}{p}}$. For $p = \infty$, we set $\|x\|_\infty = \max_i |x_i|$. Given $D, D' \in PD_n(X)$ and $\sigma : D \longrightarrow D'$ a bijection, we define $c(\sigma) \in \R^n$ by choosing an ordering $D = \{z_1, ... , z_n\}$, and setting $c(\sigma) = (\|z_1 - \sigma(z_1)\|_\infty, ... , \|z_n - \sigma(z_n)\|_\infty)$. Note that the use that we will make of $c(\sigma)$ is independent of the ordering we have picked on $D$.

\begin{definition}
Let $n\in \Z_{>0}$, $p \in \R_{\geq 1} \cup \{\infty\}$ and $D,D' \in PD_n(X)$. The $p$-Wasserstein distance between $D$ and $D'$ is defined by: 

\[W^p(D,D') := \min_{\sigma : D \stackrel{\sim}{\to} D'} \|c(\sigma)\|_p,\]

\noindent where $\sigma$ ranges over all bijections between $D$ and $D'$.
\end{definition}

$W^\infty$ is commonly called the Haussdorff distance.

\begin{proposition}\label{p:pnorms}
Let $n\in \Z_{>0}$ and $p,q \in \R_{\geq 1} \cup \{\infty\}$, there exists two strictly non-negative constants $m(p,q)$ and $M(p,q)$ such that for all $D,D' \in PD_n(X)$, one has:

\[m(p,q) \cdot W^q(D,D') \leq W^p(D,D') \leq M(p,q) \cdot W^q(D,D').\]
\end{proposition}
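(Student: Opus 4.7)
The plan is to reduce the statement to the classical equivalence of $p$-norms on the finite-dimensional vector space $\R^n$. Recall that for any $p,q \in \R_{\geq 1} \cup \{\infty\}$ there exist constants $\alpha(n,p,q), \beta(n,p,q) > 0$ such that for every $x \in \R^n$,
\[\alpha(n,p,q)\cdot \|x\|_q \leq \|x\|_p \leq \beta(n,p,q)\cdot \|x\|_q.\]
This is standard and I would quote it rather than reprove it.

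Next, I would apply this bound pointwise to the cost vector $c(\sigma) \in \R^n$ associated with each bijection $\sigma : D \to D'$, and then take the infimum over $\sigma$. Let $\sigma_p$ and $\sigma_q$ denote bijections achieving the minima defining $W^p(D,D')$ and $W^q(D,D')$ respectively (these exist since the set of bijections between two finite sets of size $n$ is finite). For the upper bound, I would write
\[W^p(D,D') = \|c(\sigma_p)\|_p \leq \|c(\sigma_q)\|_p \leq \beta(n,p,q)\cdot \|c(\sigma_q)\|_q = \beta(n,p,q)\cdot W^q(D,D'),\]
using the minimality of $\sigma_p$ for the $p$-norm in the first inequality. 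The lower bound is obtained by the symmetric chain
\[W^p(D,D') = \|c(\sigma_p)\|_p \geq \alpha(n,p,q)\cdot \|c(\sigma_p)\|_q \geq \alpha(n,p,q)\cdot W^q(D,D'),\]
where the last step uses the minimality of $\sigma_q$ for the $q$-norm. Setting $m(p,q) := \alpha(n,p,q)$ and $M(p,q) := \beta(n,p,q)$ yields the claim.

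There is no genuine obstacle here; the only subtlety is the standard trick of using the $p$-minimizer on the left and the $q$-minimizer on the right of each inequality so that the norm equivalence can be applied to a single fixed cost vector before taking infima. I would also briefly remark that the constants depend on $n$, which is consistent with the statement since the proposition is formulated for a fixed $n$.
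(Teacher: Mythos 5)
Your proof is correct and is exactly the argument the paper is invoking when it says the proposition is ``a direct consequence of the equivalence of all norms on a finite dimensional real vector space''; you have simply written out the details (applying the norm equivalence to the cost vectors and pairing each minimizer with the right side of the inequality) that the paper leaves implicit.
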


\begin{proof}
This is a direct consequence of the equivalence of all norms on a finite dimensional real vector space. 
\end{proof}

Therefore, the topology induced by $W^p$ on $PD_n(X)$ is independent of $p$.

Given $x \in \R^2$, we denote by $\pi(x)$ the orthogonal projection of $x$ onto the diagonal $\Delta$. For $D,D' \in PD_n(X)$, a partial matching between $D$ and $D'$ is the data of two possibly empty subsets $I \subset D$ and $I' \subset D'$, together with a bijection $\sigma : I \longrightarrow I'$. We will use the notation $(\sigma, I, I') : D \longrightarrow D'$. Given $(\sigma, I, I') : D \longrightarrow D'$, where $I$ and $I'$ have $\ell$ elements, we choose an ordering $D = \{z_1, ..., z_n\}$ and similarly $D' = \{z'_1,...,z'_n\}$, where $z_i \in I$ and $z'_i \in I'$ for all $i \leq \ell$. We set $c(\sigma, I, I') \in \R^{2n - \ell}$ defined by:

\[c(\sigma,I,I')_i = \begin{cases} \|z_i - \sigma(z_i)\|_\infty ~\textnormal{if}~i\leq \ell \\

\|z_i - \pi(z_i)\|_\infty ~\textnormal{if}~\ell + 1 \leq i\leq n \\
\|z'_{i - (n - \ell)} - \pi(z'_{i - (n - \ell)})\|_\infty ~\textnormal{if}~ n + 1 \leq i\leq 2n - \ell

\end{cases}.
\]

\begin{definition}
Let $n\in \Z_{>0}$, $p \in \R_{\geq 1} \cup \{\infty\}$ and $D,D' \in PD_n(X)$. The diagonal-$p$-Wasserstein distance between $D$ and $D'$ is defined by:

\[W^p_d(D,D') := \min_{(\sigma,I,I') : D \to D'} \|c(\sigma, I, I')\|_p,\]

\noindent where $(\sigma, I, I')$ ranges over all partial matchings between $D$ and $D'$.
\end{definition}

$W_d^\infty$ is usually called the bottleneck distance \cite[Definition 3.9]{CTDA}.

\begin{proposition}\label{p:diadnondiag}
Let $n\in \Z_{>0}$ and $p \in \R_{\geq 1} \cup \{\infty\}$. The topologies induced on $PD_n(X)$ by $W^p $ and $W^p_d$ are the same.
\end{proposition}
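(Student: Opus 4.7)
The plan is to show that the two metrics actually \emph{coincide} on a neighborhood of each $D \in PD_n(X)$, which is stronger than mere equivalence of topologies.

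First, I would observe that $W^p_d \leq W^p$ always holds. Any full bijection $\sigma : D \to D'$ may be viewed as a partial matching $(\sigma, D, D')$ with $\ell = n$; the second and third cases in the definition of $c(\sigma, I, I')$ are then vacuous, so $c(\sigma, D, D')$ lives in $\R^n$ and equals $c(\sigma)$ termwise. Consequently the infimum defining $W^p_d$ is taken over a set containing all full bijections, and so cannot exceed the infimum defining $W^p$.

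Next, I would fix $D \in PD_n(X)$ and set $\eta := \min_{z \in D} \|z - \pi(z)\|_\infty$. Because $D$ has exactly $n$ points, all lying in $X \subset \{(b,d) \mid b < d\}$, each is at strictly positive $\ell^\infty$-distance from the diagonal, so $\eta > 0$. Given any $D' \in PD_n(X)$ with $W^p_d(D, D') < \eta$, let $(\sigma, I, I') : D \to D'$ be an optimal partial matching. If some $z \in D \setminus I$ existed, then $\|c(\sigma, I, I')\|_p$ would include the coordinate $\|z - \pi(z)\|_\infty \geq \eta$ and hence be $\geq \eta$, contradicting optimality. Therefore $I = D$, and since $|I| = |I'|$ and $|D'| = n$ we also have $I' = D'$, so $\sigma$ is a full bijection. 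Its cost as a partial matching equals $\|c(\sigma)\|_p$, so $W^p(D, D') \leq \|c(\sigma)\|_p = W^p_d(D, D')$.

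Combining the two inequalities yields $W^p(D, D') = W^p_d(D, D')$ as soon as $W^p_d(D, D') < \eta$. The identity map on $PD_n(X)$ is therefore a local isometry from $W^p_d$ to $W^p$, hence a homeomorphism, and the induced topologies coincide. The only subtlety I anticipate is verifying that the cost vector $c(\sigma, I, I')$ really reduces to $c(\sigma)$ when $\ell = n$; once that bookkeeping is settled, the rest is a one-line pigeonhole argument exploiting the strictly positive distance from the points of $D$ to the diagonal.
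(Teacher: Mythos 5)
Your argument is correct and is essentially the paper's own proof: the same inequality $W^p_d \leq W^p$ from viewing bijections as partial matchings, the same threshold $\min_{z\in D}\|z-\pi(z)\|_\infty>0$, and the same pigeonhole forcing $I=D$ for nearby diagrams. Your observation that the two metrics actually coincide near each $D$ is a mild (and valid) strengthening of the paper's ball-inclusion statement, but it follows from the identical ingredients, so there is nothing substantively different to report.
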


\begin{proof}
Because matchings are partial matchings, for all $D,D' \in PD_n(X)$, one has $W_d^p(D,D') \leq W^p(D,D')$. Therefore, any $W^p$-open subset of $PD_n(X)$ is $W^p_d$-open. 

To prove the converse, it is sufficient to prove that for all $D \in PD_n(X)$, there exists $\varepsilon_D > 0$ such that for all $0 < \varepsilon \leq \varepsilon_D$:
\[\left \{D' \in PD_n(X) \mid W^p_d(D,D') < \varepsilon \right \} \subseteq \left \{D' \in PD_n(X) \mid W^p(D,D') < \varepsilon \right \}.  \]

Let $D \in PD_n(X)$. We define $\varepsilon_D := \min_{z \in D} \| z - \pi(z)\|_\infty > 0$, and let $0 < \varepsilon \leq \varepsilon_D$. Let $D' \in PD_n(X)$ be such that $W_d^p(D,D') < \varepsilon \leq \varepsilon_D$. Let $(\sigma, I, I') : D \longrightarrow D'$ be such that $\|c(\sigma, I, I')\|_p =W_d^p(D,D')$. Then $I$ has to be equal to $D$, because otherwise, we have to match a point $z\in D$ to it's projection onto the diagonal, and hence one would have $\varepsilon \leq \varepsilon_D \leq \|z - \pi(z)\|_\infty \leq \|c(\sigma, I, I')\|_p = W^p_d(D,D')$. Therefore, $\sigma$ is a bijection defined on $D$, satisfying $\|c(\sigma)\|_p < \varepsilon$ and hence, $W^p(D,D') < \varepsilon$. This proves the desired inclusion.
\end{proof}

\subsection{Approximating functions on sets}\label{trans_univ_approx}

We recall a useful approximation result for functions on sets.


\begin{theorem}[Theorem 9  \cite{Zaheer2017}]\label{thm:zaheer_approx}
Let $X$ be a compact subset of $\R^d$, and $M$ be a positive integer. Let $2^X_M \subset 2^X$ denote the set of subsets of $X$ with exactly $M$ elements, equipped with the Hausdorff metric.

For any Hausdorff continuous function $L : 2^X_M \to \R$ and $\varepsilon > 0$, there exist a natural number $p > 0$ and two continuous functions $\phi : X \to \R^p$ and $\rho : \R^p \to \R$ such that: $$ \sup_{S \in 2^X_M} \left|\rho \left(\sum_{x\in S} \phi(x)\right) -  L(S) \right| \leq \varepsilon. $$
\end{theorem}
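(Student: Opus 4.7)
The plan is to identify $2^X_M$ with (an open subset of) the symmetric product $X^M/\mathfrak{S}_M$ and then apply the Stone--Weierstrass theorem together with the fundamental theorem of symmetric polynomials. Concretely, $L$ lifts to the symmetric continuous function $\hat L : X^M \to \R$ defined by $\hat L(x_1,\ldots,x_M) = L(\{x_1,\ldots,x_M\})$ on the dense open locus of tuples with pairwise distinct coordinates; after extending $\hat L$ continuously to all of $X^M$ (this is automatic in the applications of interest, where $L$ is bounded and uniformly continuous on a compact subset of $2^X_M$), the classical Weierstrass approximation theorem on the compact set $X^M \subset \R^{dM}$ yields a polynomial $P$ with $\sup_{X^M}|\hat L - P| \leq \varepsilon$. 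Replacing $P$ by its symmetrization $Q := \tfrac{1}{M!}\sum_{\sigma \in \mathfrak{S}_M} P \circ \sigma$ preserves the bound, since $\hat L$ is symmetric, and yields a symmetric polynomial approximant.

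Next, I would appeal to the fundamental theorem of (multivariate) symmetric polynomials. For a multi-index $\alpha \in \N^d$, write $x^\alpha := \prod_k (x)_k^{\alpha_k}$ for $x \in \R^d$ and set $P_\alpha(x_1,\ldots,x_M) := \sum_{i=1}^M x_i^\alpha$. The algebra of symmetric polynomials in the $M$ vector-valued variables $x_1,\ldots,x_M$ is generated by the family $\{P_\alpha\}_{|\alpha|\leq N}$ for any $N$ sufficiently large relative to $\deg Q$ and $M$. Thus, there exist finitely many multi-indices $\alpha_1,\ldots,\alpha_p$ and a polynomial $q : \R^p \to \R$ with $Q(x_1,\ldots,x_M) = q\bigl(P_{\alpha_1}(x),\ldots,P_{\alpha_p}(x)\bigr)$. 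Setting $\phi(x) := (x^{\alpha_1},\ldots,x^{\alpha_p})$ and $\rho := q$, both continuous, one computes for any $S = \{x_1,\ldots,x_M\} \in 2^X_M$:
\[
\rho\Bigl(\sum_{x \in S} \phi(x)\Bigr) = q\bigl(P_{\alpha_1}(x),\ldots,P_{\alpha_p}(x)\bigr) = Q(x_1,\ldots,x_M),
\]
which differs from $L(S) = \hat L(x_1,\ldots,x_M)$ by at most $\varepsilon$, as required.

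The main obstacle is the invocation of the multivariate fundamental theorem of symmetric polynomials. In one variable ($d=1$) this is classical: the power sums $p_k = \sum_i x_i^k$ for $1 \leq k \leq M$ generate the ring of symmetric polynomials in $x_1,\ldots,x_M$, via Newton's identities. For $d \geq 2$ the corresponding statement belongs to the theory of multisymmetric (or polarized) polynomials, and one must carefully verify that the polarized power sums $\{P_\alpha\}$ generate the algebra of $\mathfrak{S}_M$-invariant polynomials and that a bound on $\deg Q$ translates into an explicit finite generating set. A secondary but genuine subtlety is the extension of $L$ from $2^X_M$ to the full multiset closure $X^M/\mathfrak{S}_M$: Hausdorff continuity alone does not guarantee such an extension, so either one strengthens the hypothesis to continuity on the compact closure, or one restricts the supremum to a compact subset of $2^X_M$ on which $L$ is uniformly continuous and bounded, which is precisely the regime in which the theorem is subsequently used for persistence diagrams with a fixed number of points drawn from a compact region of $\R^2$.
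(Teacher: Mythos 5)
The paper does not actually prove this statement: it is imported verbatim as Theorem~9 of Zaheer et al., whose own argument is quite different from yours. There, for $d=1$, one shows that the polarized power-sum map $E(x_1,\ldots,x_M)=(\sum_i x_i,\ \sum_i x_i^2,\ldots,\sum_i x_i^M)$ is a continuous injection of the compact symmetric product into $\R^M$, hence a homeomorphism onto its image, and one takes $\rho = L\circ E^{-1}$ (extended by Tietze), obtaining an \emph{exact} sum-decomposition rather than an approximation. Your route --- Stone--Weierstrass on $X^M$, symmetrization, and the fundamental theorem of multisymmetric polynomials --- is a legitimate alternative, and it has the advantage of working verbatim for arbitrary $d$, which is what the paper actually needs (persistence diagram points live in $\R^2$, or $\R^{2+d}$ after one-hot encoding). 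The step you flag as the main obstacle is a genuine but classical theorem (Schl\"afli, Junker; see Weyl's \emph{Classical Groups}, or Briand, Vaccarino, Rydh for modern treatments): in characteristic zero the polarized power sums $P_\alpha=\sum_i x_i^\alpha$ with $|\alpha|\le M$ generate the algebra of $\mathfrak{S}_M$-invariant polynomials in $M$ vector variables, so your $N$ can be taken equal to $M$ independently of $\deg Q$; citing this closes that gap.

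Your second reservation is not a mere subtlety but a genuine defect of the statement as written, and you are right to insist on it. Since $2^X_M$ is not closed under Hausdorff limits (an $M$-point set can converge to a set with fewer points), a Hausdorff-continuous $L$ on $2^X_M$ need not be bounded: for $X=[0,1]$ and $M=2$, the function $L(\{a,b\})=1/|a-b|$ is continuous (it is $1/\mathrm{diam}$, and the diameter is Hausdorff--Lipschitz), yet any $\rho\bigl(\phi(a)+\phi(b)\bigr)$ with $\phi,\rho$ continuous is bounded, being a continuous image of the compact set $X^2$, so no uniform approximation is possible. The theorem is therefore only correct under the multiset reading, i.e.\ with $L$ continuous on the compact quotient $X^M/\mathfrak{S}_M$ --- which is how the paper's remark and its subsequent application (with $f$ continuous on $PD_n(X)$ for compact $X$) implicitly use it. Under that hypothesis your extension of $\hat L$ is automatic and the proof goes through.
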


\begin{remark}
The previous theorem is also true for multi-sets.
\end{remark}

Using the universal approximation theorem for neural networks \cite{Haykin2010NeuralNA} the statement of Theorem \ref{thm:zaheer_approx} can be extended to the statement that every Hausdorff continuous function $L$ on uniformly bounded finite subsets of $\R^2$ of cardinality $M$ can be arbitrarily well approximated by

\begin{align}\label{eq:sum_decompositon}
    L'(\{x_1,...,x_n\}) := \rho \left (\sum_{i=1}^n \phi(x_i) \right )
\end{align}
where $\phi$ and $\rho$ are neural networks with a finite number of hidden layers containing a finite number of neurons with a non-constant, bounded, and non-decreasing continuous activation function like ReLU. A neural network like (\ref{eq:sum_decompositon}) is called a Deep Set \cite{Zaheer2017}.

The encoder part of the transformer architecture introduced in \cite{Vaswani2017} without positional encoding and with multi-head attention pooling (see Section \ref{sec:pers_arc}) composed with a fully-connected neural network is at least as expressive as a Deep Sets model, see \cite{Lee2018}. Hence, this architecture satisfies the \emph{universal approximation theorem of set transformers}.
Moreover, as noted in \cite{Lee2018}, the self-attention mechanism enables explicit interactions between instances of a set and also higher-order interactions by stacking multiple layers. The authors of the paper further show state-of-the-art performance of this architecture on various set-based datasets.

\section{The \texttt{Persformer} architecture}
\label{sec:persformer}

This section is devoted to introducing the \texttt{Persformer} architecture in detail. We refer to \cite{Vaswani2017} for a detailed introduction of the self-attention mechanism.

\subsection{The building blocks of the \texttt{Persformer} architecture}\label{sec:pers_arc}

\begin{figure}[!ht]\label{fig:pers_arc}
    \centering
    \includegraphics[width=9cm]{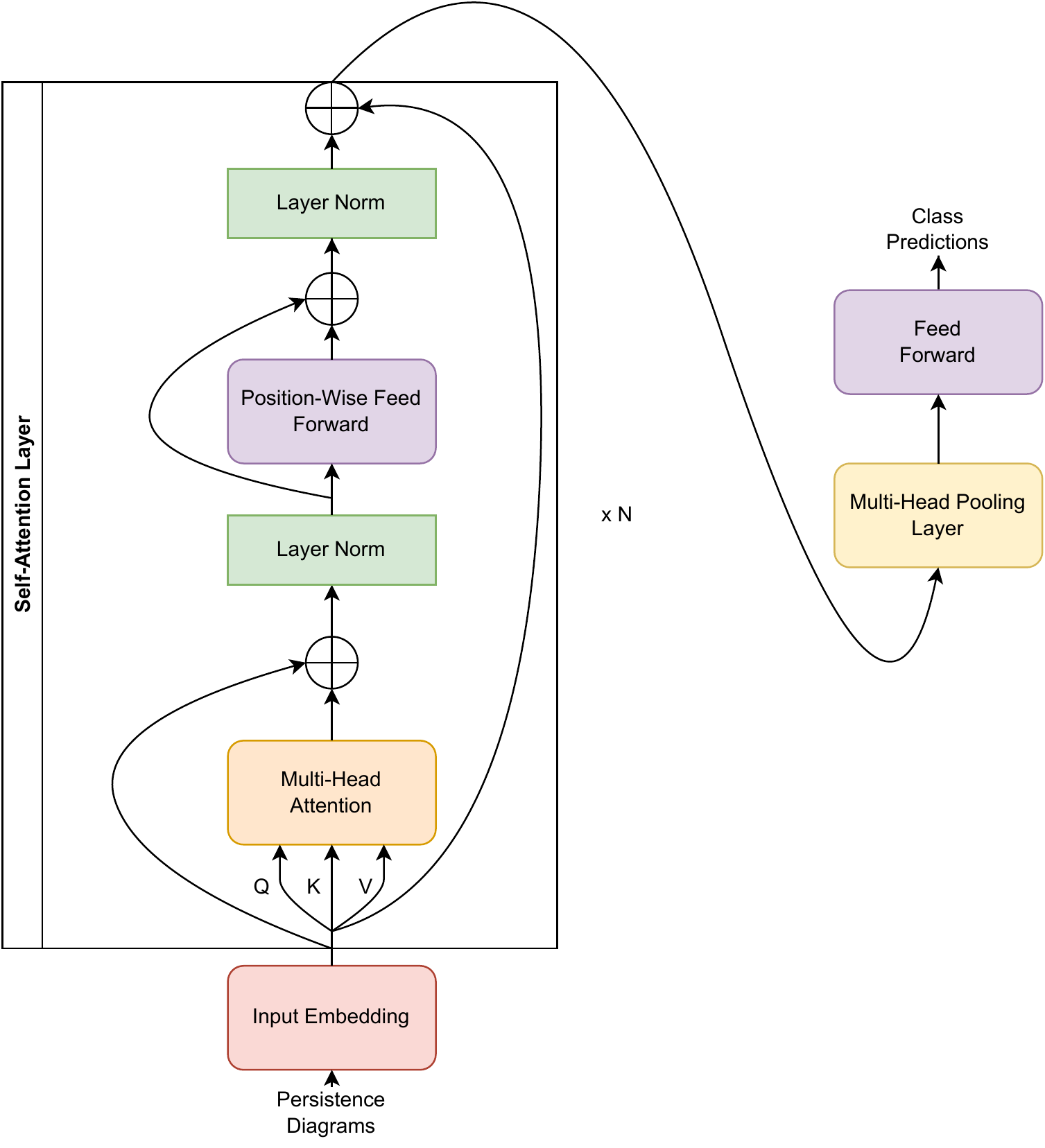}
    \caption{Diagram of the \texttt{Persformer} architecture.}
\end{figure}

The \texttt{Persformer} architecture is shown in Figure \ref{fig:pers_arc}. 
It consists of an embedding layer, which is a trainable position-wise fully connected layer\footnote{The position-wise feed-forward network consists of a fully connected neural network with two layers that is to all vectors in the sequence.} $\R^2\to \R^d$, followed by stacked self-attention layers consisting of a multi-head self-attention block and a fully connected feed-forward layer. Finally, the multi-head attention pooling layer provides a vector representation of the persistence diagram, and a fully connected neural network computes the final class prediction. The individual building blocks of the architecture are explained in more detail in the following sections.

\paragraph{Self-attention block} The self-attention mechanism aims to model pairwise interactions of elements in a sequence.
For this purpose, three families of vectors $\bm{Q}, \bm{K}, \bm{V}\in \R^{N \times d'}$ are calculated from a sequence of length $N$ of $d$-dimensional vectors $\bm{X}\in \R^{N \times d}$. These families are called \emph{query}, \emph{key}, and \emph{value} vectors. They are linear transformations of the input sequence $\bm{X}$ by trainable matrices $\bm{W}_Q,\,\bm{W}_K, \, \bm{W}_V \in \R^{d\times d'}$. For each query vector $\bm{Q}_i$, a similarity score is computed with all key vectors by calculating the scalar product up to a factor of $1/\sqrt{d'}$, and then all the similarity scores are normalized by using the softmax function to get the so-called attention score
\[
\mathrm{AttentionScore}(\bm{Q}_i, \bm{K}) = \mathrm{softmax}\left(\frac{\bm{Q}_i \bm{K}^T}{\sqrt{d'}}\right) \in \R^N.
\]
The output of the self-attention block $\mathrm{Attention}(\bm Q, \bm K, \bm V)$ is a sequence of length $N$ consisting of $d'$-dimensional vectors, where the $i$-th vector is given by a convex combination of the value vectors
\[
\mathrm{Attention}(\bm Q, \bm K, \bm V)_i = \sum_{j=1}^N \mathrm{AttentionScore}(\bm{Q}_i, \bm{K})_j \bm{V}_j.
\]

\paragraph{Multi-head Attention}
A multi-head attention block combines several self-attention blocks in parallel and enables joint attention to different parts of the input sequence \cite{Vaswani2017}.

To this end, the query, key, and value vectors are split into a family of $H$ sequences of vectors $\bm Q^{(h)}, \bm K^{(h)}, \bm V^{(h)}$ of size $\R^{d'/H}$, where $H$ is the number of attention-heads and $h=1, \ldots, H$. Here we assume that $H$ divides $d'$. Furthermore, the scalar-products of query and key vectors are multiplied by a factor $\sqrt{H/d'}$. For each head the attention vectors
\[
\mathrm{head_h} = \mathrm{Attention}(\bm Q^{(h)}, \bm K^{(h)}, \bm V^{(h)}) \in \R^{N\times (d'/H)}
\]
for $h=1, \ldots, H$ are computed and then combined to an output sequence
\[
\mathrm{MultiHead}(\bm Q,\bm K,\bm V) = \mathrm{Concat}(\mathrm{head_1}, \ldots, \mathrm{head_H}) \bm W^O \in \R^{N \times d}
\]
where $\bm W^O \in \R^{d'\times d}$ is a trainable linear transformation.

\paragraph{Position-wise Feed-Forward Network}
The position-wise feed-forward network is a fully connected neural network with two layers that is applied to each vector in the sequence.

\paragraph{Multi-head Attention Pooling}
The multi-head attention pooling layer is a variation of the multi-head attention layer with a single trainable query vector $\bm Q\in \R^{1\times d'}$ and key and value vectors as linear transformations of the input sequence \cite{Lee2018}. The output
\[
\mathrm{MultiHead}(\bm Q, \bm K, \bm V) \in \R^d
\]
is a single vector that does not depend on the order of the input sequence.

\paragraph{Residual connections}
Since the points in a persistence diagram may be very close to each other, it may be difficult for the encoder to separate them. This difficulty leads the gradients of the attention blocks to be very small at the beginning, which complicates the training of this architecture, especially with the numerous self-attention layers \cite{goodfellow2016deep}.

To overcome this problem, we added residual connections between the self-attention layers. To our knowledge, this architecture choice has not been made so far in the literature. Empirically, we could see that this solved the vanishing gradient problem and allowed us to stack significantly more self-attention layers while substantially speeding up the training.

\subsection{Permutation-invariance property}
 On a high level, the \texttt{Persformer} architecture consists of an \emph{encoder} ($\phi$), a \emph{pooling layer} ($p$), and a \emph{decoder} ($\rho$) that computes the final class prediction. As input, we take the points in the persistence diagram together with their one-hot encoded homology dimensions.
 
 The encoder consists of stacked attention layers. Each attention layer maps a sequence of vectors to a sequence of vectors and is permutation-equivariant, i.e., for every $\{x_1, \ldots, x_n\}$ sequence of vectors and permutation $\sigma$ of the set $\{1, \ldots, n\}$ we have
\[
\phi(\{x_{\sigma(1)}, \ldots, x_{\sigma(n)}\}) = \sigma(\phi(\{x_{1}, \ldots, x_{n}\})),
\]
where $\sigma$ permutes the order of the vectors in the sequence $\phi(\{x_{1}, \ldots, x_{n}\})$.
The attention pooling maps a sequence of vectors to a single vector in a permutation-invariant way, i.e., the output vector does not depend on the order of the sequence.
Combining the permutation-equivariance property of the encoder and the permutation-invariance of the attention layer, we get a permutation-invariant map $\rho \circ p \circ \phi$ that maps a sequence of vectors to a single vector.

\subsection{Universal approximation theorem of \texttt{Persformers}}

We state the following technical lemma, whose proof is a consequence of Heine's theorem on uniform continuity of continuous functions with compact supports.

\begin{lemma}\label{lem:uniformconv}
Let $X \subset \R^a$ be a compact subset. For $k\geq 0$, let $\phi_k,\phi : X \longrightarrow \R^b$ be continuous maps such that $K := \overline{ \cup_k \phi_k(X)\cup \phi(X)}$ is compact, and let $\rho_k,\rho : K \longrightarrow \R^c$ be continuous maps. If

\[ \sup_{x \in X} \|\phi_k(x) - \phi(x)\| \underset{k \longrightarrow +\infty}{\longrightarrow} 0 \quad \textnormal{and} \quad \sup_{x \in K} \|\rho_k(x) - \rho(x)\| \underset{k \longrightarrow +\infty}{\longrightarrow} 0,  \]

\noindent then one has for every $n\geq 1$:

\[ \sup_{(x_1,...,x_n) \in X^n} \left \|\rho_k \left (\sum_{i=1}^n \phi_k(x_i) \right ) - \rho \left (\sum_{i=1}^n \phi(x_i) \right ) \right \| \underset{k \longrightarrow +\infty}{\longrightarrow} 0.\]

\end{lemma}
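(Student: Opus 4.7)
I would run the standard $\varepsilon/2$ triangle-inequality argument: split the quantity into two pieces, control one with the hypothesized uniform convergence $\rho_k \to \rho$, and control the other with uniform continuity of $\rho$ on a compact set, which is exactly what Heine's theorem provides.

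First, for $x = (x_1, \ldots, x_n) \in X^n$ I would abbreviate $\Phi_k(x) := \sum_{i=1}^n \phi_k(x_i)$ and $\Phi(x) := \sum_{i=1}^n \phi(x_i)$. Both take values in the $n$-fold Minkowski sum $nK := \{y_1 + \cdots + y_n \mid y_i \in K\}$, which is compact as the image of $K^n$ under continuous addition. Then the triangle inequality yields
\[\bigl\|\rho_k(\Phi_k(x)) - \rho(\Phi(x))\bigr\| \leq \bigl\|\rho_k(\Phi_k(x)) - \rho(\Phi_k(x))\bigr\| + \bigl\|\rho(\Phi_k(x)) - \rho(\Phi(x))\bigr\|,\]
and I would treat the two summands separately and show each is eventually below $\varepsilon/2$ uniformly in $x \in X^n$.

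The first summand is dominated by $\sup_{y \in nK} \|\rho_k(y) - \rho(y)\|$, which tends to $0$ by the second uniform convergence hypothesis (read on the compact set $nK$ where the evaluations actually take place). For the second summand, Heine's theorem applied to the continuous map $\rho$ on the compact set $nK$ yields uniform continuity: for every $\varepsilon > 0$ there exists $\delta > 0$ such that $\|u-v\| < \delta$ implies $\|\rho(u) - \rho(v)\| < \varepsilon$. Since
\[\|\Phi_k(x) - \Phi(x)\| \leq \sum_{i=1}^n \|\phi_k(x_i) - \phi(x_i)\| \leq n \cdot \sup_{y \in X} \|\phi_k(y) - \phi(y)\| \underset{k \to +\infty}{\longrightarrow} 0\]
uniformly in $x$ by the first hypothesis, for $k$ large enough the argument of $\rho$ lies within $\delta$ of $\Phi(x)$ uniformly in $x$, and the second summand is therefore below $\varepsilon$ uniformly in $x \in X^n$.

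The only mildly delicate point is a domain extension issue: as written, $K$ contains only the individual images $\phi_k(x)$ and $\phi(x)$, whereas $\rho$ must be evaluated at sums living in $nK$. This is benign because $nK$ is itself compact, so Heine's uniform continuity and the uniform convergence of $\rho_k$ to $\rho$ transfer to $nK$ (indeed $\rho$ and $\rho_k$ are naturally taken to be defined on this larger compact set). Beyond this bookkeeping, I expect no further obstacle.
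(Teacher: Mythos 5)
Your proof is correct and is exactly the argument the paper has in mind: the paper only remarks that the lemma "is a consequence of Heine's theorem on uniform continuity," and your $\varepsilon/2$ split into a uniform-convergence term for $\rho_k$ and a uniform-continuity term for $\rho$ on the compact Minkowski sum is the standard way to fill that in. Your observation about the domain mismatch between $K$ and $nK$ is a genuine (benign) imprecision in the lemma's statement, and you resolve it in the intended way.
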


\begin{theorem}[Universal approximation theorem of \texttt{Persformer}]
Let $X \subset \R^2$ be compact and let $PD_n(X)$ be the space of persistence diagrams contained in $X$, consisting of at most $n$ points and endowed with the topology induced by any of the distances $W^p$ or $W_d^p$ ($p\in \R_{\geq 1}$).

Then, every continuous function $f: PD_n(X)\to \R$ can be uniformly approximated by a \texttt{Persformer} model with ReLU-activations and a fixed hidden dimension of encoder layer $2n+1$.
\end{theorem}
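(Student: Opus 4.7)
By Propositions \ref{p:pnorms} and \ref{p:diadnondiag}, all the topologies induced on $PD_n(X)$ by $W^p$ and $W_d^p$ coincide, so I would work throughout with the bottleneck distance $W_d^\infty$. This is the metric best adapted to diagrams with fewer than $n$ points: padding a diagram with copies of a fixed diagonal point $\delta \in \Delta$ does not change its $W_d^\infty$-class, and therefore defines a continuous surjection $q : \mathrm{Sym}^n(X \cup \{\delta\}) \twoheadrightarrow PD_n(X)$ from the compact space of $n$-point multisets in $X \cup \{\delta\}$. Composing with $q$ lifts the target function to a continuous symmetric map $\tilde f := f \circ q$ on the compact set $K := \mathrm{Sym}^n(X \cup \{\delta\})$, and the problem reduces to uniformly approximating $\tilde f$ on $K$ by a Deep-Set-like map that can be realised by a Persformer.

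I would then produce a continuous sum-decomposition $\tilde f = \rho\bigl(\sum_{i=1}^n \phi(x_i)\bigr)$ with latent dimension exactly $2n+1$. A direct appeal to Theorem \ref{thm:zaheer_approx} gives some uncontrolled latent dimension $p$, so a sharper construction is needed. Using the identification $\R^2 \simeq \mathbb{C}$, the first $n$ complex power sums $p_k(z_1, \dots, z_n) = \sum_i z_i^k$ for $k = 1, \dots, n$ form a continuous complete system of $\mathfrak{S}_n$-invariants on $n$-multisets in $\mathbb{C}$ (via Newton's identities), yielding $2n$ real coordinates; one additional coordinate encodes the effective cardinality of the underlying diagram. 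This gives an explicit $\phi : \R^2 \to \R^{2n+1}$ together with a continuous $\rho : \R^{2n+1} \to \R$ such that $\rho(\sum_i \phi(x_i)) = \tilde f(\{x_1, \dots, x_n\})$ exactly.

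Next, I would realise this Deep Set inside the Persformer. By the universal approximation theorem \cite{Haykin2010NeuralNA}, there are ReLU networks $\phi_k : \R^2 \to \R^{2n+1}$ and $\rho_k : \R^{2n+1} \to \R$ converging uniformly to $\phi$ on the compact $X \cup \{\delta\}$ and to $\rho$ on the compact image of $\sum_i \phi(x_i)$, respectively. The Persformer's embedding layer $\R^2 \to \R^{2n+1}$ together with its position-wise feed-forward sublayers implements $\phi_k$; the self-attention sublayers, tuned with sufficiently small query/key weights so that the softmax outputs are near-uniform and combined with the residual connections, can be arranged to act approximately as the identity on each token; the multi-head attention pooling with a constant query then returns a fixed multiple of $\sum_i \phi_k(x_i)$, absorbed into $\rho_k$; and the final MLP decoder implements $\rho_k$. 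Applying Lemma \ref{lem:uniformconv} to the sequences $\phi_k, \rho_k$ yields uniform convergence of the Persformer outputs to $\rho(\sum_i \phi(x_i)) = \tilde f$ on $K$, hence to $f$ on $PD_n(X)$.

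The main obstacle is the dimensional bound $2n+1$: Theorem \ref{thm:zaheer_approx} alone does not give dimensional control, so one must construct an explicit continuous sum-decomposition matching exactly this latent dimension through $\mathfrak{S}_n$-symmetric invariants on $n$-tuples in $\R^2$, which is precisely where the $2 \cdot n + 1$ appears. A secondary technical issue is calibrating the self-attention sublayers and residual connections so that the transformer encoder stack preserves the per-token embeddings accurately enough for the pooling layer to reproduce the intended sum $\sum_i \phi_k(x_i)$; this is where the residual connections introduced in Section \ref{sec:pers_arc} are crucial, as they allow the attention blocks to be perturbed close to the identity without destroying the Deep-Set structure.
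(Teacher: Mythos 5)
Your proposal follows the same overall skeleton as the paper's proof (reduce to the Hausdorff topology via Propositions \ref{p:pnorms} and \ref{p:diadnondiag}, obtain a continuous sum-decomposition $\rho(\sum_i\phi(x_i))$, realise $\phi$ and $\rho$ by networks inside the Persformer by switching off the attention, and conclude with Lemma \ref{lem:uniformconv}), but it diverges meaningfully at the one step the paper treats most lightly. The paper simply invokes Theorem \ref{thm:zaheer_approx} and asserts that the latent dimension can be taken to be $2n+1$, even though that theorem as stated only guarantees \emph{some} $p$; you instead construct the decomposition explicitly via the complex power sums $p_k(z)=\sum_i z_i^k$, $k=1,\dots,n$, under $\R^2\simeq\mathbb{C}$, using Newton's identities to get injectivity on multisets and compactness to get continuity of the inverse. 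This is a genuine improvement: it actually justifies the dimension bound appearing in the statement, and your padding by a diagonal point $\delta$ also handles the ``at most $n$ points'' reading of $PD_n(X)$, which the paper's proof (a supremum over $X^n$ only) quietly ignores. The realisation inside the architecture is essentially identical in both arguments (attention scores near zero, constant/zero pooling query, residuals), just phrased as a perturbative statement in your version rather than an exact one.

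The one place where your argument is weaker than the paper's is the approximation of $\phi$ by the encoder under the width constraint. The claim ``fixed hidden dimension of encoder layer $2n+1$'' forces the per-token networks to be \emph{narrow}, and the classical universal approximation theorem \cite{Haykin2010NeuralNA} you cite for $\phi_k$ produces wide shallow networks, not networks of width $2n+1$. The paper gets around this by observing that the encoder with residual connections contains a deep narrow residual feed-forward network and then citing the width-constrained universal approximation result of \cite{DBLP:journals/corr/abs-2007-06007}. You should either invoke that (or a comparable deep-narrow) result for $\phi_k$, or note that your explicit polynomial $\phi$ makes such an approximation concrete; as written, this step does not deliver the fixed-width claim in the theorem.
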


\begin{proof}
By propositions \ref{p:pnorms} and  \ref{p:diadnondiag}, the topologies induced by the distances $W^p$ and $W^p_d$ are all the same. Therefore, we can assume that  $f : PD_n(X) \to \R$ is a $W^\infty$ (\emph{i.e}. Haussdorff) continuous function.  Let $\varepsilon > 0$. Then by Theorem \ref{thm:zaheer_approx}, there exists continuous maps $\phi : \R^2 \longrightarrow \R^{2n + 1}$ and $\rho : \R^{2n + 1} \to \R $ such that for all $x_1,...,x_n \in X$, 

\[\left|\rho \left(\sum_{i = 1}^n \phi(x_i)\right) - f\left (\{x_1,...,x_n\} \right )\right| \leq \frac{\varepsilon}{2}. \]

\noindent Keeping notations of the previous section, we now define a sequence of  \texttt{Persformer} models $F_k : (\R^2)^n \to \R$ of the following form: \[F_k(x_1,...,x_n) = \rho_k \circ p_k \circ (\phi_k (x_1),...,\phi_k(x_n)). \]

It should be noted that when attention-scores are set to $0$, an attention layer is equivalent to a standard fully-connected feed-forward one. Therefore $\phi_k$, which is the concatenation of the input embedding layer and the self-attention blocks together with residual connections, is strictly more expressive than a feed-forward network with residual connections and $2n+1$ hidden neurons per layer. Therefore according to \cite{DBLP:journals/corr/abs-2007-06007}, we can choose $\phi_k$  such that: 

\[\sup_{x \in X} \|\phi_k(x) - \phi(x)\| \underset{k \longrightarrow +\infty}{\longrightarrow} 0. \]

Similarly, when setting the query vector to $0$, the attention pooling computes the mean of the vector $\phi_k(x)$, which we can consider to be simply the sum of the coordinate elements of the vector, after renormalization by the feed-forward layer $\rho_k$. Consequently, we obtain the following expression:

\[F_k(x_1,...,x_n) = \rho_k \left ( \sum_{i=1}^n \phi_k(x_i)\right ). \]

Finally, by the universal approximation theorem for fully-connected layers \cite{Haykin2010NeuralNA}, we can choose $\rho_k$  such that: 

\[\sup_{x \in X} \|\rho_k(x) - \rho(x)\| \underset{k \longrightarrow +\infty}{\longrightarrow} 0. \]

By lemma \ref{lem:uniformconv}, we can conclude that:

\[ \sup_{(x_1,...,x_n) \in X^n} \left \|F_k(x_1,...,x_n) - \rho \left (\sum_{i=1}^n \phi(x_i) \right ) \right \| \underset{k \longrightarrow +\infty}{\longrightarrow} 0.\]

In particular, there exists an integer $N$ such that for all $k \geq N$: 

\[ \sup_{(x_1,...,x_n) \in X^n} \left \|F_k(x_1,...,x_n) - \rho \left (\sum_{i=1}^n \phi(x_i) \right ) \right \| \leq \frac{\varepsilon}{2}.\]

Therefore, 

\[\sup_{(x_1,...,x_n)\in X^n} \|F_N(x_1,...,x_n) - f(\{x_1,...,x_n\})\| \leq \varepsilon. \]

\end{proof}

\subsection{Training of \texttt{Persformers}}

\paragraph{Training specification}
Unlike the optimization of other neural network architectures, a learning warm-up stage is crucial to achieving good results for transformer architectures \cite{Popel2018}. Empirically, we have found that this results in an increase of about 2 percentage points in test accuracy for the \texttt{Persformer} model.
We used the optimizer AdamW with a weight decay of 1e-4 and a maximum learning rate of 1e-3, a batch-size of 32, and a cosine with hard restarts learning-rate scheduler with 10 warm-up epochs, 3 cycles, and a total of 1,000 epochs. Using grid-search, we found that with a dropout of 0.2 in the decoder part of the \texttt{Persformer} and no dropout in the encoder part we obtained the best results.

\paragraph{Model specification}
We used a \texttt{Persformer} model with residual connections, 5 encoder layers with hidden-dimension $d=128$, 8 attention heads per layer, trainable layer normalization, multi-head attention pooling, and GELU-activation, as well as a decoder which is a fully connected neural network with layer sizes
$$[128,\,256,\,256,\,64,\,5].$$ 

\subsection{Benchmarks}\label{s:benchmark}

\subsubsection{The \texttt{ORBIT5k} dataset}
To demonstrate the performance of our models, we consider the \texttt{ORBIT5k} dataset which is a standard dataset to benchmark methods for vectorizing persistence diagrams \cite{Adams2017, PLLay, Carrire2020PersLayAN}. The dataset consists of subsets of size 1,000 of the unit cube $[0, 1]^2$ generated by a dynamical system that depends on an parameter $\rho>0$. To generate a point cloud, a random initial point $(x_0, y_0)\in [0, 1]^2$ is chosen randomly in $[0, 1]^2$ and then a sequence of points $(x_n, y_n)$ for $n=0, 1, \ldots, 999$ is generated recursively by:

\begin{align*}
x_{n+1} &= x_n + \rho y_n(1 - y_n) & \mathrm{mod}\ 1,\\
y_{n+1} &= y_n + \rho x_{n+1} (1-x_{n+1}) & \mathrm{mod}\ 1.
\end{align*}

For every parameter $\rho = 2.5, 3.5, 4.0, 4.1$ and $4.3$ we generated a dataset of 1,000 orbits obtaining 5,000 orbits at the end. The shape of the point cloud depends heavily on the parameter $\rho$. The orbits are transformed into persistent diagrams by computing the alpha complex filtration in dimensions 0 and 1 \cite{Carrire2020PersLayAN}. The classification problem is to recover the parameter $\rho\in \{2.5, 3.5, 4.0, 4.1, 4.3\}$ from the given persistence diagrams. 

In addition to the \texttt{ORBIT5k} dataset, we consider the  \texttt{ORBIT100k} dataset, containing 20,000 orbits per parameter instead of 1,000 orbits per parameter, for a total of 100,000 orbits.
Both datasets are split in a ratio of 70:30 into training and test sets.

Besides classical kernel methods, we compare the performance of our \texttt{Persformer} model with current state-of-the-art models -- the \texttt{PersLay} \cite{Carrire2020PersLayAN} model and the \texttt{PLLAy} model \cite{PLLay}. Since \texttt{PLLay} uses both the persistence diagrams and the raw point clouds as inputs, we trained a \texttt{Persformer}-like model using only the raw point clouds as input, achieving a test accuracy of 99.1\%, an increase of 4.1 percentage points compared to \texttt{PLLay}. 
We repeated all experiments for the best performing architecture five times and report the average performance and the standard deviation on a holdout test dataset. A detailed comparison with all the models is in Table \ref{tab:persformer-results}.

\subsubsection{\texttt{MUTAG} graph dataset}
We also use the \texttt{MUTAG} dataset to evaluate our model's performance on graph classification and compare it to the \texttt{Perslay} model. The \texttt{MUTAG} dataset consists of 188 graphs from chemical compounds labeled as mutagenic or non-mutagenic. Each graph contains between 17 and 28 nodes. The \texttt{MUTAG} dataset has been used in many studies and is a standard benchmark for graph classification. For a direct comparison to the state-of-the-art, the same setup as in \cite{Carrire2020PersLayAN}  is used in our experiment. 

As a filtration function, we used the \emph{Heat Kernel Signature} defined on node $v$ in graph $G$ as
\[
\mathrm{hks}_t(v) = \sum_{k=1}^n \exp(-t\lambda_k)\varphi_k(v)^2,
\]
with $L = I_n - D^{-\frac 12} A D^{\frac 12}$ being the \emph{normalized graph Laplacian} with eigenfunctions $\varphi_1, \ldots,\, \varphi_n$ and eigenvalues $0\le \lambda_1 \le \ldots \le \lambda_n \le 2$, where $I_n$ is the identity matrix and $A$ is the adjacency matrix of the graph. We use the diffusion parameter $t=10.0$ and extended persistent homology $\mathrm{Ord}_0,\, \mathrm{Rel}_1,\, \mathrm{Ext}_0^+,\, \mathrm{Ext}_1^-$ resulting in six-dimensional input vectors where the first two dimensions are the birth- and death-times and the last four are the one-hot-encoded homology types.

The model we used for the \texttt{MUTAG} dataset is small, with only two layers of size $d=32$ and four attention heads per layer. We use GELU-activation and trainable layer normalization, as well as a decoder which is a fully connected neural network with layer sizes
$$[64,\,32,\,2].$$ We found a better model performance when combining attention pooling with sum pooling instead of using attention pooling alone.  We computed the sum and attention pooling for the last layer of the encoder and concatenated the results.

We evaluate the model by performing 10-fold cross-validation and report the average accuracy on the hold-out validation set. We compare our model to the \texttt{PersLay} model that is only trained on the extended persistence diagrams since we are interested in the ability of our model to learn expressive features from the extended persistence diagrams. The results of our experiment are in Table \ref{tab:persformer-results-mutag}.
\begin{table}[t]
\centering
\caption{Average validation accuracy and standard deviation across 10-fold cross-validation. Both models are only trained on the extended persistence diagrams.}
\vspace{0.3cm}
\label{tab:persformer-results-mutag}
\begin{tabular}{@{}lcc@{}}
\toprule
Model & \texttt{MUTAG}  \\ \midrule
\texttt{PersLay} & $85.8\% (\pm 1.3)$ \\
\texttt{Persformer} & $\mathbf{89.9}\% (\pm 2.1)$ \\ \bottomrule
\end{tabular}
\end{table}

\begin{table}[htbp]
\begin{center}
\caption{The \texttt{Persformer} model achieves better test accuracy on the \texttt{ORBIT5k} and \texttt{ORBIT100k} datasets than previous state-of-the-art models for point cloud inputs (pc), persistence diagram inputs (pd) and point cloud+persistence diagram inputs (pc+pd).}
\label{tab:persformer-results}
\vspace{+2mm}
\begin{threeparttable}
\begin{tabular}{p{55mm}cccccc}
\toprule
\multirow{2}{*}{\vspace{-2mm}Model} & \multicolumn{3}{c}{\texttt{ORBIT5k}} & & & \multicolumn{1}{c}{\texttt{ORBIT100k}} \\
\cmidrule{2-4} \cmidrule{6-7} 
& pc & pd & pd+pc & &  & pd\\ 
\hline
PointNet \cite{Qi} & 70.8\%\tnotex{tnote:pllay} & & & & &\\
CNN \cite{PLLay} & 91.5\%\tnotex{tnote:pllay} & & & & &\\
\texttt{Pllay} + CNN \cite{PLLay} & & & 94.5\%\tnotex{tnote:pllay} & & &\\
\texttt{Pllay} + CNN \cite{PLLay} & & & 95.0\%\tnotex{tnote:pllay} & & &\\
\hline
Persistence Scale Space Kernel \cite{reininghaus2014stable} &  & 72.38\%\tnotex{tnote:perslay} & & & &\\
Persistence Weighted Gaussian Kernel \cite{Kusano2018} &  & 76.63\%\tnotex{tnote:perslay} & & & &\\
Sliced Wasserstein Kernel \cite{Carriere2017} &  & 83.6\%\tnotex{tnote:perslay} & & & &\\
Persistence Fisher Kernel \cite{Le2018} &  & 85.9\%\tnotex{tnote:perslay} & & & &\\
\texttt{PersLay} \cite{Carrire2020PersLayAN} &  & 87.7\%\tnotex{tnote:perslay} & & & & 89.2\tnotex{tnote:perslay}\\
\specialrule{1pt}{-1pt}{0pt}
\texttt{Persformer} w/o layer-norm (ours) & 97.8\%  & 90.4\%& 98.2\% & & &91.0\%\\
\texttt{Persformer} w/ layer-norm (ours) & 96.4\% & --\tnotex{tnote:unable}  & 96.1\% &  & &--\tnotex{tnote:unable}\\
\begin{tabular}{@{}c@{}}\texttt{Persformer} w/ layer-norm \\+ residual connections (ours)\end{tabular} & \begin{tabular}{@{}c@{}}\textbf{99.1}\%\\($\pm$ 0.3)\end{tabular} & \begin{tabular}{@{}c@{}}\textbf{91.2}\%\\($\pm$ 0.8)\end{tabular}& \begin{tabular}{@{}c@{}}\textbf{99.1}\%\\($\pm$ 0.3)\end{tabular}& & &  \begin{tabular}{@{}c@{}}\textbf{92.0}\%\\($\pm$ 0.4)\end{tabular}\\
\bottomrule
\end{tabular}
\begin{tablenotes}
  \item\label{tnote:pllay}Score as reported in \cite{PLLay}.
  \item\label{tnote:perslay}Score as reported in \cite{Carrire2020PersLayAN}.
  \item\label{tnote:unable}The model could not be trained and had accuracy similar to random guessing.
\end{tablenotes}
\end{threeparttable}
\end{center}
\end{table}

\section{Interpretability method for \texttt{Persformer}} \label{sec:interpretability_persformer}

Compared to previous methods \cite{Carrire2020PersLayAN, PLLay}, our model does not make use of handcrafted vectorization of persistence diagrams in the first layer of the neural network, which allows \texttt{Persformer} to satisfy the universal approximation property in the sense of Section \ref{trans_univ_approx}. This allows us to identify those points that are important for the classification, by the mean of Saliency Maps, an already well-known interpretability method. We will show with concrete datasets (\texttt{Orbit5k},\, \texttt{Orbit100k} and a curvature dataset \cite{Bubenik2020}) that, contrary to the common view, ``small bars'', which are unstable features of persistence diagrams with respect to bottleneck or Wasserstein distances,  are also essential predictors for classifications and regression problems, as already identified in \cite{Bubenik2020}.

\paragraph{Saliency Maps}

The \texttt{Persformer} model for a classification problem is an almost everywhere differentiable function $F: \mathcal D \to \R^m$, where $m$ is the number of classes and $\mathcal D$ is the space of persistence diagrams. It maps a persistence diagram to the logarithm of the class probability. Let $d$ be the maximum homology dimension to be considered and let $x = (x_k)_{k\in \{ 1,\ldots, n \}}\in (\R^{2+d})^n$ be a persistence diagram and $i(x) = \mathrm{argmax}_j F(x)_j$. The first two coordinates of $x_k \in \R^{2+d}$ are the birth and death coordinates and the last $d$ coordinates are the one-hot encoded homology dimensions. The \emph{saliency map} of $F$ on $x$ is defined as 
\[ \mathcal{S}_F(x) :=
\left (\left\|\frac{\partial F_{i(x)}(x)}{\partial x_k}\right\|_2 \right )_{k \in \{ 1,\ldots, n \}}\in \R_{\geq 0}^n.
\]

\noindent Therefore, $\mathcal{S}_F$ assigns to each point in a persistence diagram, a real value indicating how important a given point in the diagram is for the classification.

\paragraph{Experiments}
\subparagraph{\texttt{ORBIT5k} dataset} For the classification problem of \texttt{ORBIT5k} as in \ref{s:benchmark}, we observe the traditional TDA motto that the important features in a persistence diagram are the most persistent ones. Indeed, points closer to the diagonal have almost zero saliency value; see Figure \ref{fig:sal_3_5}. 

\begin{figure}

  \includegraphics[width=0.32\textwidth]{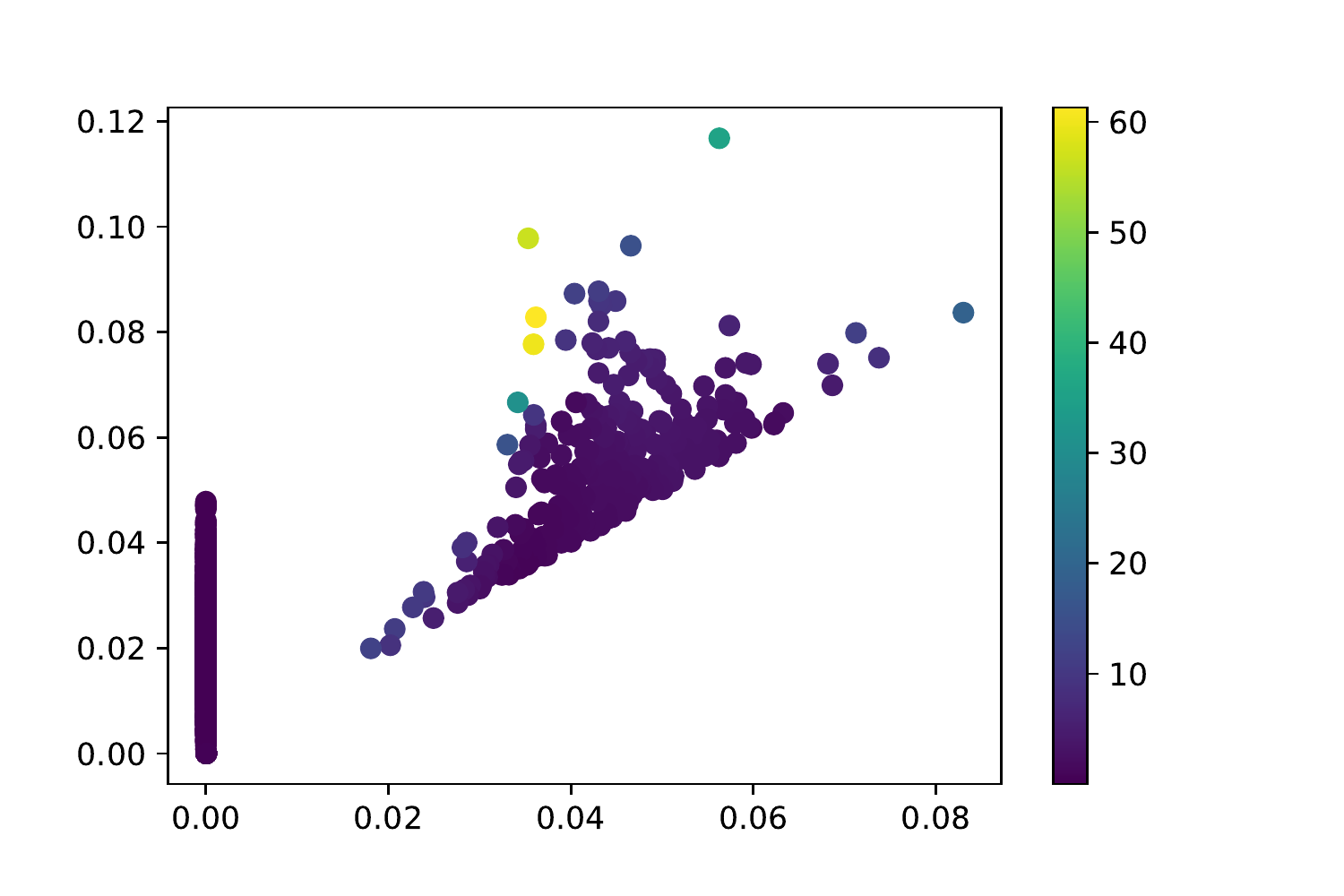}
  \includegraphics[width=0.32\textwidth]{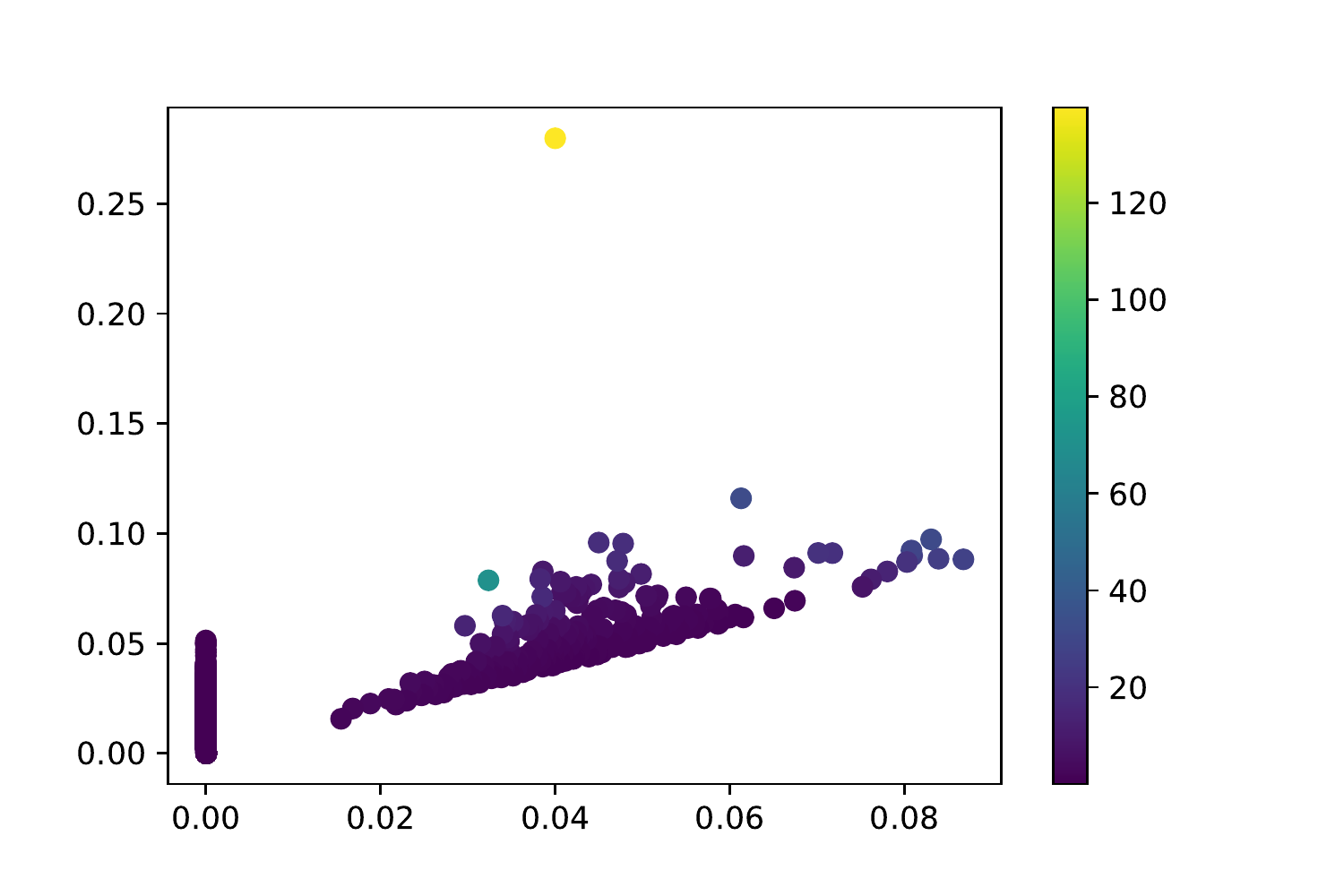}
  \includegraphics[width=0.32\textwidth]{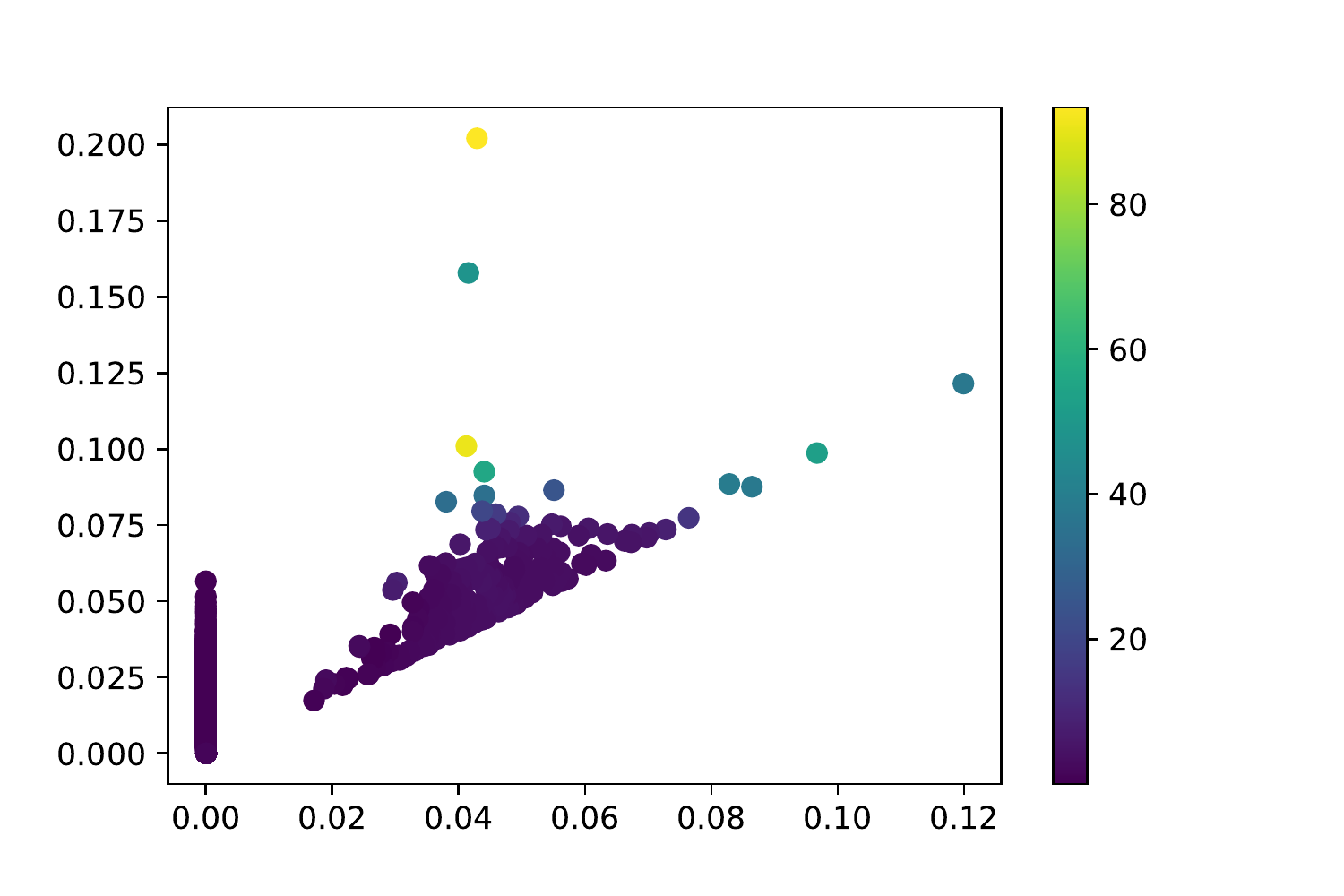}
  \caption{Saliency map of a persistence diagram of the \texttt{Orbit5k} dataset corresponding to (left to right) $\rho=3.5, 4.1, 4.3$ for both $H_0$ and $H_1$. The $H_0$ features are all points with birth value 0 and the $H_1$ features are all points with birth value $>0$. The color scale represents the Saliency map score for the classification problem described in the paper body.}\label{fig:sal_3_5}

\end{figure}

What stands out is that many stable features have a high Saliency map score, which explains the good performance of the classical vectorization methods. However, there are also points in the persistence diagram close to the diagonal with a high Saliency map score, i.e., they correspond to ``short bars'' but are very influential for the class prediction. Therefore, these are important for the classification and explain that our model outperforms previous ones. It is also remarkable that extreme points of the persistence diagram have high Saliency map scores, and $H_0$ features have a very low Saliency map score.
To empirically verify that the Saliency map scores are highly relevant for the classification, we filtered the persistence diagrams to consider only points with a Saliency map score higher than a given percentile of the saliency scores values per persistence diagram. We then evaluated the filtered dataset with our original model. When filtering all points that are above the 80th percentile obtained a test accuracy of $91.1\%$ on \texttt{ORBIT5k}, which is very close to $91.2\%$, the raw performance of our model with all points in the input persistence diagram. This indicates that all features considered in our model are relevant for the classification. Additionally, this also shows that our method allows filtering persistence diagrams. For example, only considering the points above the 80th percentile still maintains good results on the test dataset. The test accuracies on dataset \texttt{ORBIT5k} under different thresholds are displayed in Fig.~\ref{fig:featureThresholdingResults}.

\begin{figure}
    \centering
	\includegraphics[width=0.65\columnwidth]{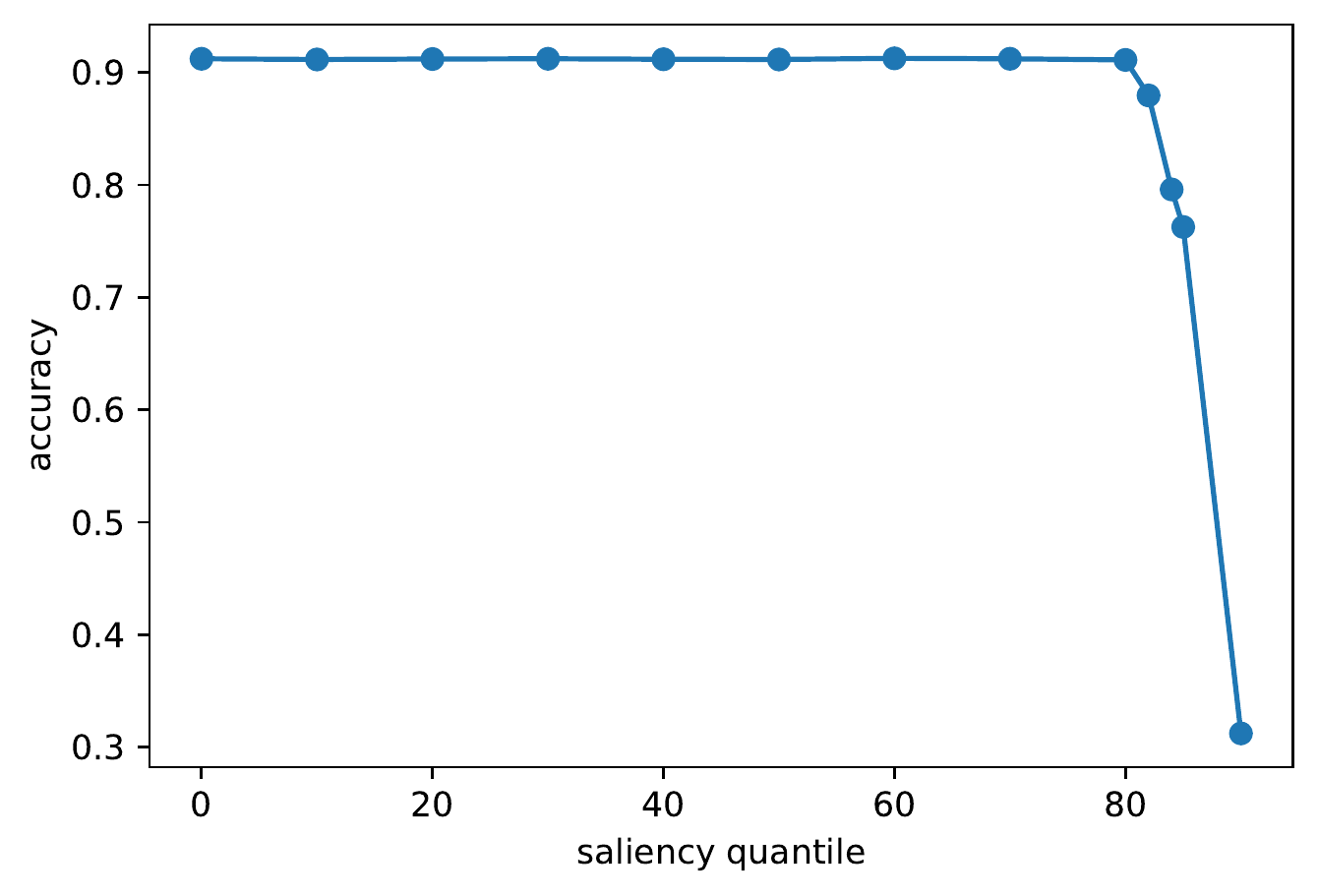}
    \caption{Test accuracies on the \texttt{ORBIT5k} dataset when considering only points above a given percentile threshold of their Saliency map score. A threshold of the 80th percentile already allows to filter the most useless points in the persistence diagram and still give very close performance to our model trained on all points.}
    \label{fig:featureThresholdingResults}
\end{figure}

We also trained a Persformer model on the filtered persistence diagram above the 80th percentile. We obtained a test accuracy of $91.2\%$, i.e., the model performs as well as the original one when filtering out points with a low Saliency map score. At the same time, we increased the computational efficiency of the Persformer model, which has a quadratic computation complexity in the persistence diagram size.

Additionally, we trained a smaller model with half the number of self-attention layers, half hidden dimension, and half the number of attention heads compared to the original model and extracted the Saliency map scores. Then we trained the original model on the filtered persistence diagram with a threshold equal to the 80th percentile of the Saliency map score of the smaller model and obtained a test accuracy of $91.0\%$. This means that we have the potential to efficiently filter out features before the training procedure using a simpler model, which still maintains the performance.

\subparagraph{Curvature dataset}
In \cite{Bubenik2020}, the authors consider the following regression problem: sample randomly 1,000 points on a disc with radius 1 of constant Gaussian curvature $K$ and predict $K$ from the Vietoris-Rips persistence diagrams of the resulting point-cloud with respect to the intrinsic metric. They give mathematical evidence that small bars in these persistence diagrams should be good predictors of the curvature, and are able to train different machine learning models with good $R^2$-score, such as Support Vector Regression. 

We reproduced the dataset of \cite{Bubenik2020} and trained a regression \texttt{Pers\-former} model with mean squared error to predict the curvature of a point cloud using the $H_1$ persistence diagrams as predictors. 
As a result, we obtain an $R^2$-score of 0.94 compared to an $R^2$-score of 0.78 using Support Vector Regression \cite{Bubenik2020}.

We calculate the Saliency map score using the Saliency map method as in the previous dataset, see Figure \ref{fig:sal_3_6}. Some points in the persistence diagram that correspond to small bars have a very high Saliency map score, which confirms the results of \cite{Bubenik2020}. An interesting observation is that the Saliency map score also seems to increase with birth time.

The importance of ``short bars" for the classifications result is much more pronounced in this case than for the \texttt{Orbit5k} dataset. Moreover, we can even show that the Saliency map score decreases with distance to the diagonal, see Figure \ref{fig:sal_dist_diag_1} and Figure \ref{fig:sal_dist_diag_2}.

To visualize the dependence of the Saliency map score on the distance to the diagonal, we normalize the lifetimes and the Saliency map scores per persistence diagram to the interval $[0,\, 1]$. Then we distribute the points in the persistence diagram to the bins $[0,\, 0.1],\ldots, [0.9,\, 1.0]$ with respect to their lifetime and consider the maximum and the sum of all Saliency map scores per bin. We then average the scores per bin over the entire test dataset. As one can clearly see, the Saliency map score decreases with the distance to the diagonal, demonstrating that ``short bars'' are important for estimating the curvature of a dataset.

\begin{figure}[!htb]

  \includegraphics[width=0.32\textwidth]{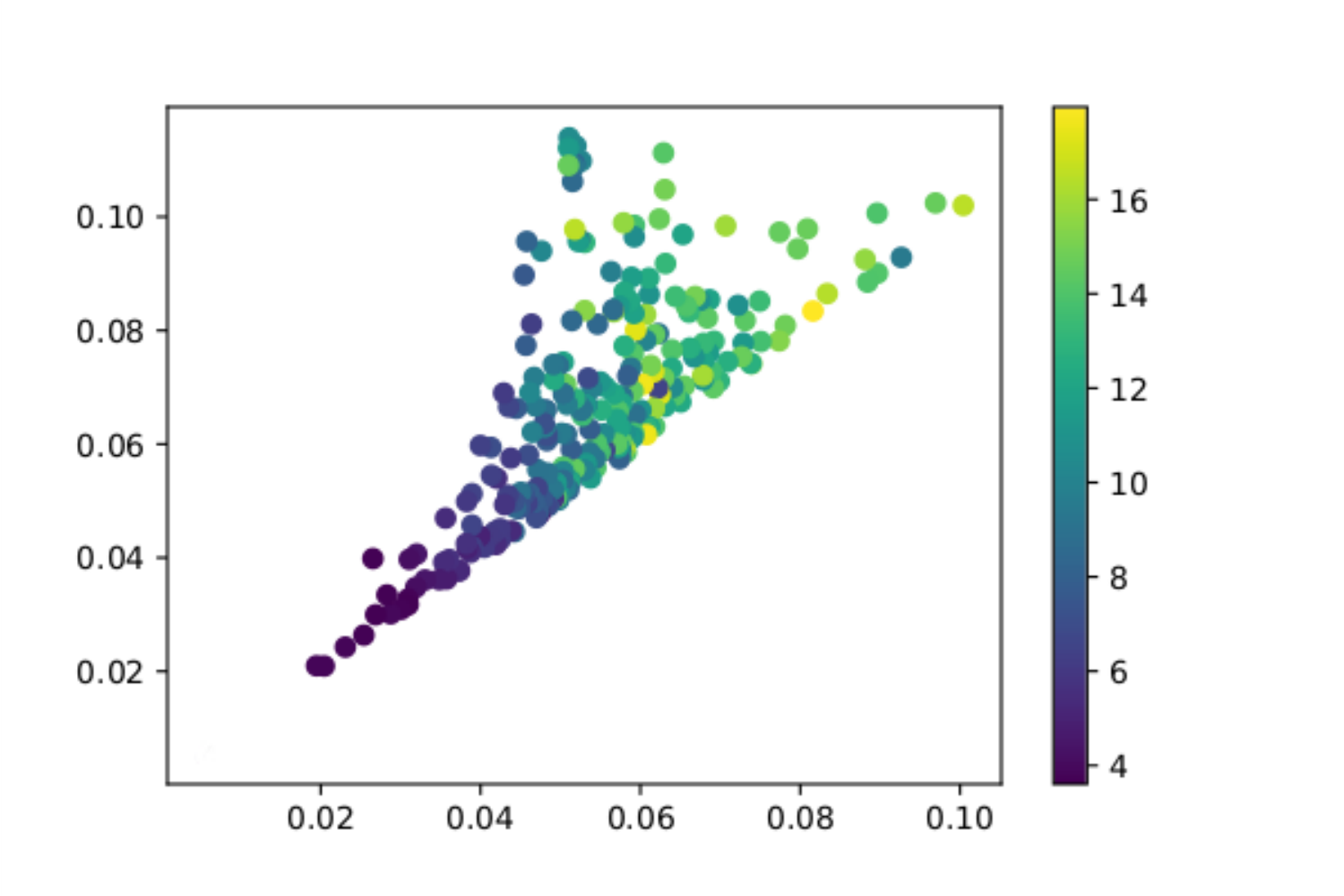}
  \includegraphics[width=0.32\textwidth]{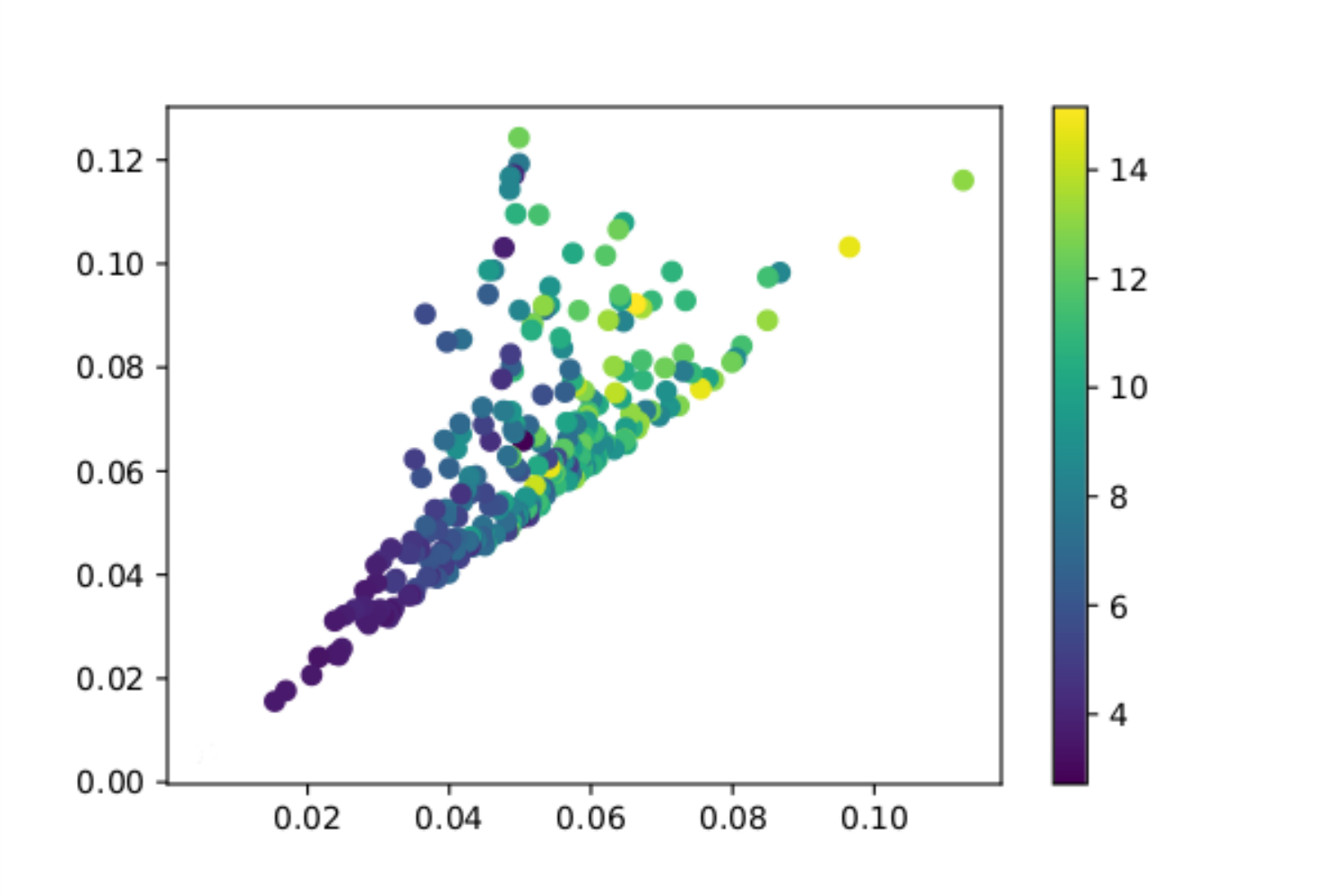}
  \includegraphics[width=0.32\textwidth]{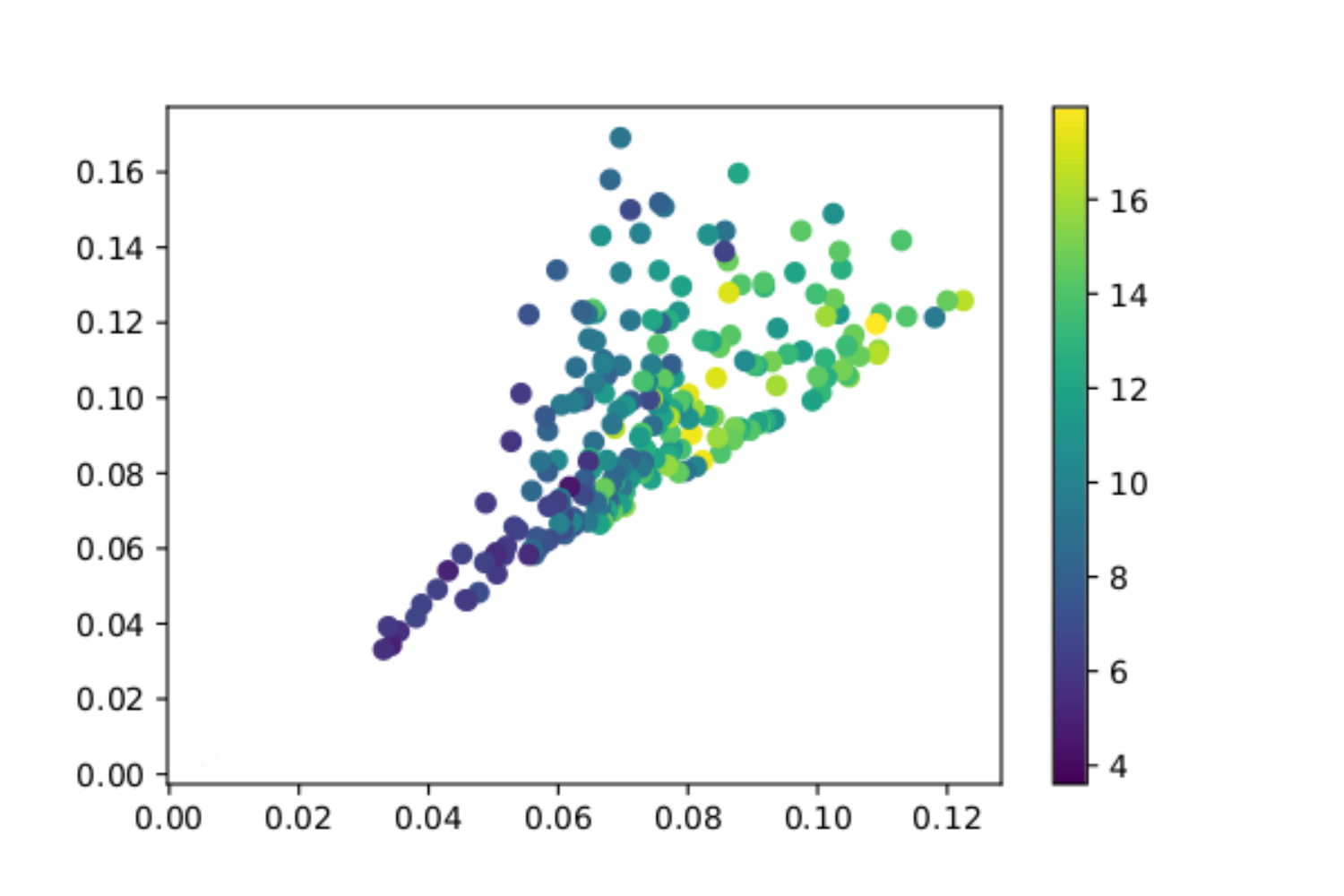}
  \caption{Saliency map of persistence diagrams of the curvature dataset of \cite{Bubenik2020} corresponding to the curvature values (left to right) $-1.50, -1.92,\  0.54$ and $H_1$.}\label{fig:sal_3_6}

\end{figure}

\begin{figure}[!htb]
\minipage{0.48\textwidth}
  \includegraphics[width=0.99\textwidth]{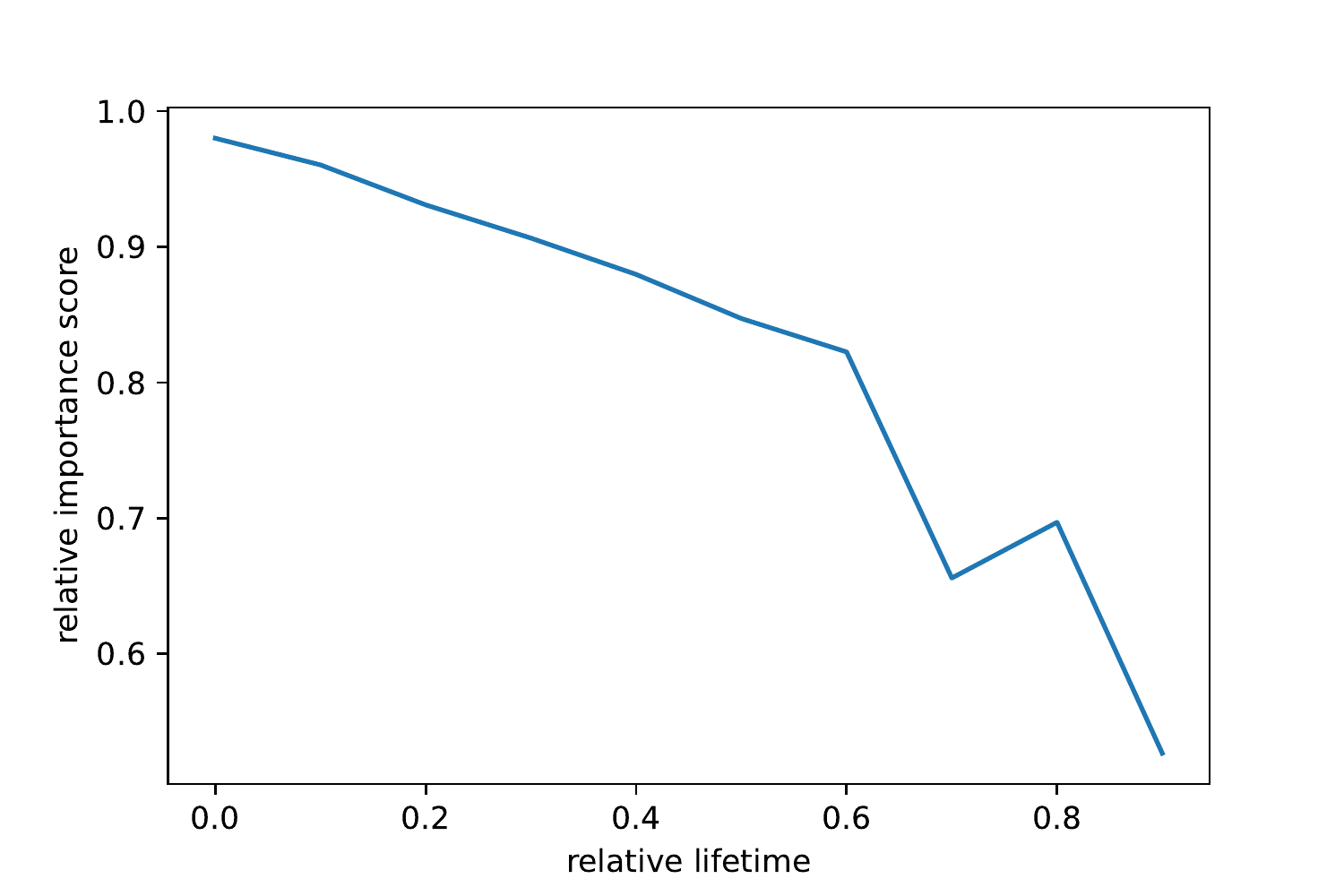}
  \caption{Relative maximum Saliency map score per bin averaged over all test persistence diagrams.}\label{fig:sal_dist_diag_1}
\endminipage\hfill
\minipage{0.48\textwidth}
  \includegraphics[width=0.99\textwidth]{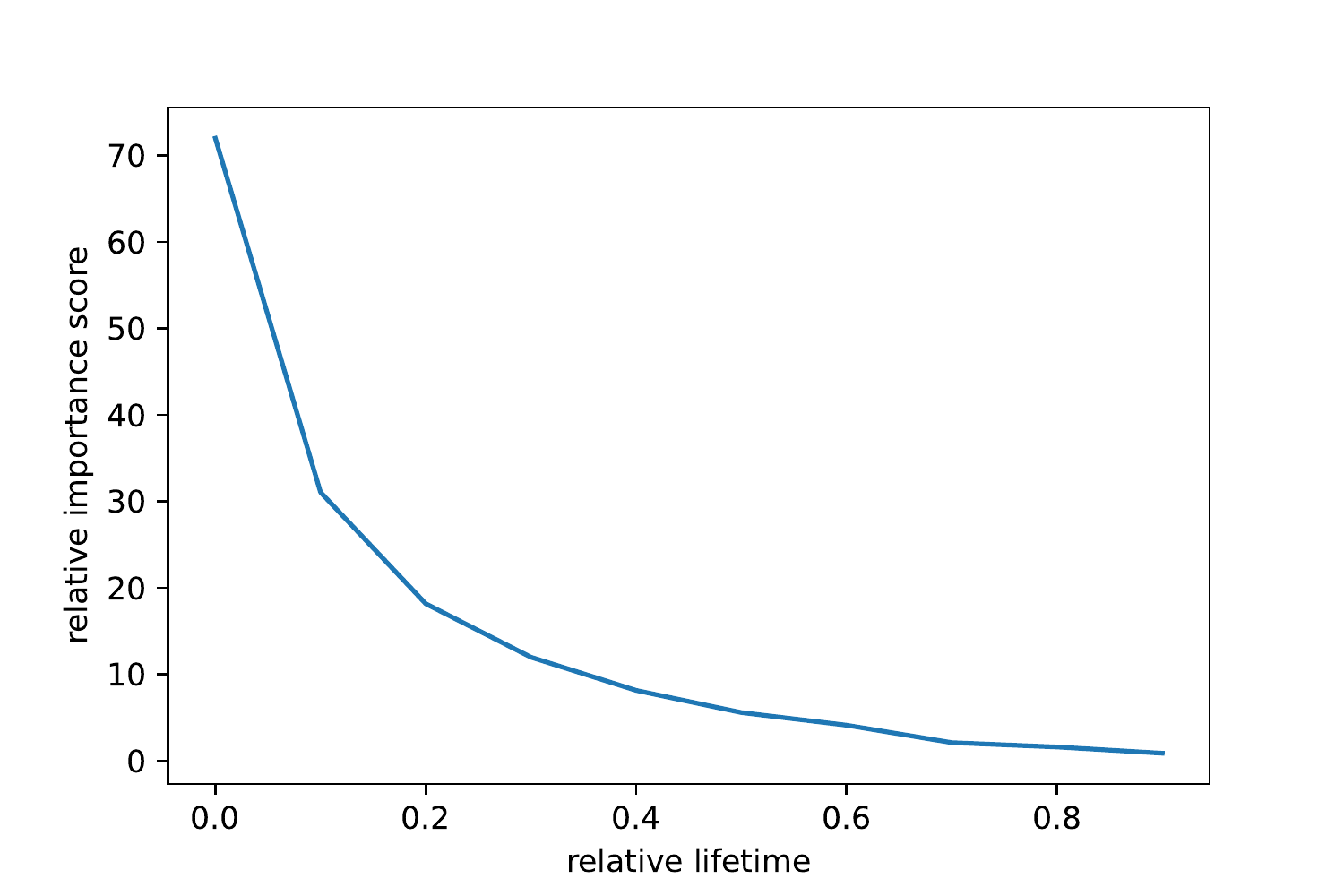}
  \caption{Sum of relative Saliency map score per bin averaged over all test persistence diagrams.}\label{fig:sal_dist_diag_2}
\endminipage
\end{figure}


\section{Conclusion}

In this work, we have introduced \texttt{Persformer}, the first Transformer architecture for learning on persistence diagrams datasets. We proved that this architecture significantly outperforms previous ones on usual benchmark synthetic and graph datasets. 

In addition,  \texttt{Persformer} is the first neural network architecture for persistence diagrams that is shown to satisfy a universal approximation theorem, enabling us to adapt Saliency Map, a well-known interpretability method, to \texttt{Persformer}. To the best of our knowledge, this is the first method for interpretable topological machine learning that allows us to highlight the topological features that matter the most to \texttt{Persformer} on specific tasks. This can be used to understand the data space and to better set up the parameters for future experiments and learning procedures. We also exhibit that Saliency Map can be used as for feature selection, reducing the data dimension significantly while not hurting the performances. We expect \texttt{Persformer} to be used in many use cases, from material science to biology, where learning on point cloud data and graphs is needed as well as understanding topological features of the data is desired.

We implemented \texttt{Persformer} within the framework of Giotto-deep\footnote{The library is available in open source at \url{https://github.com/giotto-ai/giotto-deep}.}, a toolbox for deep learning on topological data \cite{giottodeep}. Giotto-deep is under ongoing development and is the first toolbox offering seamless integration between topological data analysis and deep learning on top of PyTorch. The library aims to provide many off-the-shelf architectures to use topology both to preprocess data (with a suite of different methods available) and to use it within neural networks. It also supports benchmarking and hyperparameter optimization. The Giotto-deep library complements another toolbox, Giotto-TDA \cite{tauzin2021giottotda}, for topological data analysis which was developed by one of the authors. It implements a wide range of methods and can be used in conjunction with Giotto-deep to build end-to-end architectures, for example for classification and regression. 

\paragraph{Acknowledgements}
The authors would like to thank Kathryn Hess Bellwald for the many fruitful discussions and valuable comments.

This work was supported by the Swiss Innovation Agency (Innosuisse project 41665.1 IP-ICT).

\newpage

\bibliographystyle{alpha}

\newcommand{\etalchar}[1]{$^{#1}$}

\newpage
\begin{appendices}

\section{Numerical instability of the \texttt{Orbit5k} dataset}
\label{sec:ig}

When preparing the benchmark datasets, we realised that the non-linear dynamical system used in the \texttt{ORBIT5K} dataset exhibits chaotic behaviour. In particular, the numerical error in the generation of the \texttt{ORBIT5k} dataset increases exponentially, see Figure \ref{fig:num_stab_1} and Figure \ref{fig:num_stab_2}. Hence the topology depends heavily on the floating point precision used. We created a dataset with arbitrary precision, meaning a high enough floating point precision such that a further increase in floating point precision does not change the point cloud, and trained a model on a dataset that was created with \texttt{float64} precision.

The model generalized well to the dataset generated with arbitrary precision, with only a small difference (2.1\% accuracy points) in test performance.

\begin{figure}[!htb]
\minipage{0.48\textwidth}
  \includegraphics[width=\linewidth]{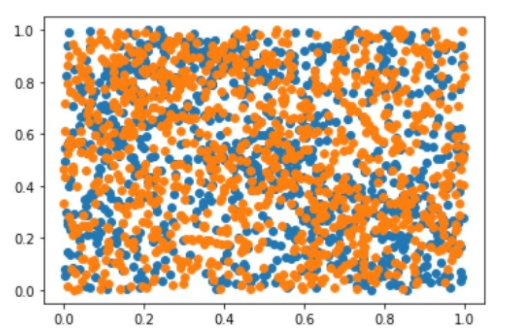}
  \caption{Orbits with the same initial point but float64 precision (orange) and infinite precision (blue)}\label{fig:num_stab_1}
\endminipage\hfill
\minipage{0.48\textwidth}
  \includegraphics[width=\linewidth]{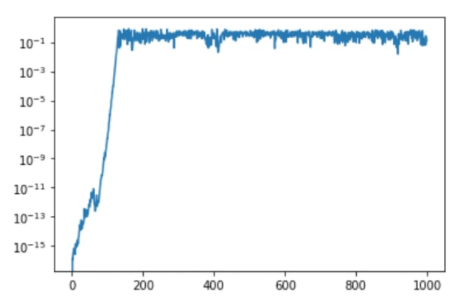}
  \caption{Distance of the $n$-th point of the orbit with arbitrary floating point precision to the $n$-th point of the orbit with float-64 precision in the quotient metric of $[0, 1]^2$ modulo $1$.}\label{fig:num_stab_2}
\endminipage
\end{figure}

\end{appendices}
\end{document}